\newtheorem{theorem}{Theorem}
\newif\ifgenAppendices
\icmltitlerunning{ForecastNet: A Time-Variant Deep Feed-Forward Neural Network Architecture for Multi-Step-Ahead Time-Series Forecasting}
\begin{document}
    
    \twocolumn[
    \icmltitle{ForecastNet: A Time-Variant Deep Feed-Forward Neural Network Architecture for Multi-Step-Ahead Time-Series Forecasting}
    
    
    
    \icmlsetsymbol{equal}{*}
    
    \begin{icmlauthorlist}
        \icmlauthor{Joel Janek Dabrowski}{addr1}
        \icmlauthor{YiFan Zhang}{addr2}
        \icmlauthor{Ashfaqur Rahman}{addr3}
    \end{icmlauthorlist}
    
    \icmlaffiliation{addr1}{Data61, CSIRO, Brisbane, QLD, 4067, Australia}
    \icmlaffiliation{addr2}{Agriculture \& Food, CSIRO, Brisbane, QLD, 4067, Australia}
    \icmlaffiliation{addr3}{Data61, CSIRO, Sandy Bay, TAS, 7005, Australia}
    
    \icmlcorrespondingauthor{Joel Janek Dabrowski}{joel.dabrowski@data61.csiro.au}
    
    \icmlkeywords{deep learning, sequence models, seasonal, time series, forecasting, time invariance}
    
    \vskip 0.3in
    ]
    
    
    
    \printAffiliationsAndNotice{} 
    
    \begin{abstract}
        Recurrent and convolutional neural networks are the most common architectures used for time series forecasting in deep learning literature. These networks use parameter sharing by repeating a set of fixed architectures with fixed parameters over time or space. The result is that the overall architecture is \textit{time-invariant} (shift-invariant in the spatial domain) or \textit{stationary}. We argue that time-invariance can reduce the capacity to perform multi-step-ahead forecasting, where modelling the dynamics at a \textit{range} of scales and resolutions is required. We propose ForecastNet which uses a deep feed-forward architecture to provide a \textit{time-variant} model. An additional novelty of ForecastNet is interleaved outputs, which we show assist in mitigating vanishing gradients. ForecastNet is demonstrated to outperform statistical and deep learning benchmark models on several datasets.
    \end{abstract}
    
    

\section{Introduction}

Multi-step-ahead forecasting involves the prediction of some variable several time-steps into the future, given past and present data. Over the set of time-steps, various time-series components such as complex trends, seasonality, and noise may be observed at a range of scales or resolutions. Increasing the number of steps ahead that are forecast increases the range of scales that are required to be modelled. An accurate forecasting method is required to model all these components over the complete range of scales.

There is significant interest in the recurrent neural network (RNN) and sequence-to-sequence models for forecasting \cite{kuznetsov2018foundations}. Several studies have shown success with variants of these models 
\cite{Zhu2017Deep, laptev2017time, wen2017multi, maddix2018deep, flunkert2017deepar, Xing2019Sentiment, hewamalage2019recurrent, Nguyen2019Forecasting, du2020multivariate}. 
The recurrence in the RNN produces a set of cells, each with fixed architecture, that are replicated over time. This replication results in a time-invariant model. Similarly, parameter sharing and convolution in the convolutional neural network (CNN) result in a shift-invariant model in the spatial domain; which is equivalent to time-invariance in the time domain. Our conjecture is that time-invariance can reduce the capacity for the model to learn various scales of dynamics across multiple steps in time; especially in multi-step-ahead forecasts.

To address this, we propose ForecastNet, which is a deep feed-forward architecture with interleaved outputs. Between interleaved outputs are network structures that differ in architecture and parameters over time. The result is a model that varies across time. We demonstrate four variations of ForecastNet to highlight its flexibility. These four variations are compared with state-of-the-art benchmark models. We show that ForecastNet is accurate and robust in terms of performance variation.

The contributions of this study are: (1) ForecastNet: a model for multi-step-ahead forecasting; 
(2) we address the time-invariance problem which (to our knowledge) has not been considered in deep learning time-series forecasting literature before; and (3) provide a comparison of several state-of-the-art models on seasonal time-series datasets.


\section{Motivations and Related Work}

\subsection{Recurrent Neural Networks}

The RNN comprises a set of cell structures with parameters that are replicated over time. These replications (cells and parameters) remain constant over time. The replication has its benefits (such as parameter sharing) but it can reduce capacity to model the complex dependencies over time. For example, a model can adapt to long-term dynamics more easily if its parameters are able to change over time.

Another challenge with RNNs is that learning long sequences can be difficult due to complex dependencies over time and vanishing gradients \citep{Chang2017Dilated}. Vanishing gradients in the RNN have been addressed in the LSTM \citep{hochreiter1997long, gers1999learning} and the Gated Recurrent Unit (GRU) \citep{cho2014learning, cho2014properties}, by introducing gate-like structures. These LSTM and GRU cell structures are however recurrent and are constant over time.

ForecastNet comprises a set of parameters and an architecture that changes over the sequence of inputs and forecast outputs. The result is that ForecastNet is \textit{not} a time-invariant model. Furthermore, ForecastNet mitigates vanishing gradient problems by using shortcut-connections and by interleaving outputs between hidden cells. The shortcut-connection approach was introduced in ResNet \citep{He2015Deep} and Highway Network \citep{Srivastava2015Training}. It has been shown to be highly effective in addressing vanishing and shattered gradient problems \citep{balduzzi2017shattered}. Additionally, shortcut-connections allow for a much deeper network \citep{He2015Deep}.

\subsection{Sequence Models}

State-of-the-art deep sequence models include multiple RNNs linked in various configurations \citep{Zhang2019SSIM}. A prominent configuration is the sequence-to-sequence (encoder-decoder) model \citep{Sutskever2014Sequence}. The sequence-to-sequence model sequentially links two RNNs (an encoder and a decoder) through a fixed size vector, such as the last encoder cell state. This can be limiting as it forms a potential bottleneck between the encoder and decoder. Furthermore, earlier inputs have to pass through several layers to reach the decoder. 

The attention model \citep{bahdanau2014neural} addresses the sequence-to-sequence model problem by adding a network structure from all the encoder cells to each decoder cell. This structure, called an \textit{attention mechanism}, ascribes relevance to the particular encoder cell. The attention mechanism is not necessarily time-invariant, so it may help reduce the time-invariance properties of the overall model. However, the time series dynamics are primarily modelled with the RNN encoder and decoder, which are time invariant.

Unlike the sequence-to-sequence and attention models, ForecastNet does not have a separate encoder and decoder. Challenges with linking these entities are thus removed.

\subsection{Convolutional Neural Networks}

The convolutional neural network (CNN) has shown promising results in modelling sequential data \citep{binkowski2018autoregressive, Mehrkanoon2019Deep, sen2019think}. The CNN uses convolution and parameter sharing to achieve shift-invariance (or translation-invariance) \cite{lecun1995convolutional}. This is a key feature in image processing, however, in time-series applications, it translates to time-invariance.

WaveNet \cite{oord2016wavenet} is a seminal CNN model which uses multiple layers of dilated causal convolutions for raw audio synthesis. This model has been generalised for broader sequence modelling problems and this generalisation is referred to as the Temporal Convolutional Network (TCN) \cite{bai2018empirical}. We select this model as a benchmark for comparing ForecastNet. Furthermore, we demonstrate how ForecastNet is able to accommodate convolutional layers in hidden cells.

\subsection{The Transformer and Self-Attention}

A model that has successfully departed from the RNN and CNN architectures is the transformer model \cite{vaswani2017attention}. This model comprises a sequence of encoders and decoders. The encoders include a self-attention mechanism and a position-wise feed-forward neural network. The decoder has the same architecture as the encoder, but with an additional attention mechanism over the encoding. 

Though the transformer has been highly successful in natural language processing, it has limitations in time series analysis. The first limitation is that it does not assume any sequential structure of the inputs \cite{vaswani2017attention}. Positional encoding in the form of a sinusoid is injected into the inputs to provide information on the sequence order. Temporal structure is key in time series modelling and is what time-series models are usually designed to model. \citet{li2019enhancing} propose including causal convolution to model local context. Convolutions are however time-invariant.

A second limitation of the transformer is that (to promote parallelisation) a large portion of the processing operates over the dimension of the input embedding rather than over time. For example, the multiple attention heads and the position-wise feed-forward neural networks operate over the embedding dimension. The transformer is thus \textit{not} designed to operate on low dimensional time series signals such as univariate time-series as considered in this study. \citet{wu2020deep} use a feed-forward neural network to increase the dimensionality of the input space. We however did not find such an approach effective on the datasets used in this study.

ForecastNet addresses both transformer model limitations. It is specifically designed to model the temporal structure of the inputs and it is not limited to multivariate time-series.

\subsection{Uncertainty Representation}

Realistically, forecasting methods should provide some form of uncertainty or confidence interval around estimates \cite{Makridakis2018Statistical}. Neural networks do not naturally provide this capability. Several approaches to incorporate uncertainty have however been proposed. These include: empirical approaches (such as bootstrapped residuals \cite{hyndman2018forecasting}, or Monte Carlo dropout \cite{Zhu2017Deep}; ensembles (such as \cite{lakshminarayanan2017Simple}); variational inference-based models (such as \cite{bayer2014learning, krishnan2015deep, chung2015recurrent, fraccaro2016sequential, doerr2018probabilistic, rangapuram2018deep}); or predictive distributions (such as \cite{flunkert2017deepar}). The predictive distribution approach (based on mixture density networks \citep{bishop1994mixture, graves2013generating}) is effective as the distribution parameters are learned directly through gradient descent. ForecastNet thus adopts this mixture density approach.


\section{ForecastNet Architecture}

As illustrated in \figurename{~\ref{fig:forecastNet}}, ForecastNet is a \textit{feed-forward} neural network comprising a set of $n_I$ inputs, a set of $n_O$ outputs, and a set of sequentially connected hidden \textit{cells} (a term borrowed from RNN literature). A detailed diagram of a \textit{simple} form of ForecastNet is presented in \figurename{~\ref{fig:forecastNetBasic}}. 
%
%
%
\begin{figure}[t]{}
    \centering

\def\horizontalsep{1.1cm}
\def\verticalsep{0.28cm}

\begin{scriptsize}
\begin{tikzpicture}[->, >={Latex[length=1.5mm, width=1mm, black!40]}, draw=black!40]
\tikzstyle{output}=[diamond, fill=black!35, draw=black!60, minimum size=0.4cm,inner sep=0pt]
\tikzstyle{input}=[circle, fill=black!20, draw=black!60, minimum size=0.3cm,inner sep=0pt]
\tikzstyle{neuron}=[fill=black!55, minimum size=9pt,inner sep=0pt]
\pgfsetshortenstart{0.7pt}
\pgfsetshortenend{0.7pt}

\node[input, pin={[pin edge={<-}]above:$\mathbf{x}$}] (X1) at (0*\horizontalsep, 0.5*\verticalsep) {};
\node[input, pin={[pin edge={<-}]above:$\mathbf{x}$}] (X2) at (1*\horizontalsep, 0.5*\verticalsep) {};
\node[input, pin={[pin edge={<-}]above:$\mathbf{x}$}] (X3) at (2*\horizontalsep, 0.5*\verticalsep) {};
\node[input, pin={[pin edge={<-}]above:$\mathbf{x}$}] (X4) at (3*\horizontalsep, 0.5*\verticalsep) {};

\node[neuron,fill=black!70] (H1) at (0*\horizontalsep, -2*\verticalsep) {};
\node[neuron,fill=black!60] (H2) at (1*\horizontalsep, -2*\verticalsep) {};
\node[neuron,fill=black!50] (H3) at (2*\horizontalsep, -2*\verticalsep) {};
\node[neuron,fill=black!40] (H4) at (3*\horizontalsep, -2*\verticalsep) {};

\node[output,fill=black!70, pin={[pin edge={->}]below:$\hat{x}_{t+1}$}] (Y1) at (0.5*\horizontalsep, -4*\verticalsep) {};
\node[output,fill=black!60, pin={[pin edge={->}]below:$\hat{x}_{t+2}$}] (Y2) at (1.5*\horizontalsep, -4*\verticalsep) {};
\node[output,fill=black!50, pin={[pin edge={->}]below:$\hat{x}_{t+3}$}] (Y3) at (2.5*\horizontalsep, -4*\verticalsep) {};
\node[output,fill=black!40, pin={[pin edge={->}]below:$\hat{x}_{t+4}$}] (Y4) at (3.5*\horizontalsep, -4*\verticalsep) {};

\foreach \source in {1,...,4}
\path (X\source) edge (H\source.north);

\path (H1) edge (H2);
\path (H2) edge (H3);
\path (H3) edge (H4);

\path (H1.south east) edge (Y1);
\path (H2.south east) edge (Y2);
\path (H3.south east) edge (Y3);
\path (H4.south east) edge (Y4);

\path (Y1) edge (H2.south west);
\path (Y2) edge (H3.south west);
\path (Y3) edge (H4.south west);

\draw [-, decorate,decoration={brace, amplitude=5pt,raise=0pt},black] 
(4*\horizontalsep,3.5*\verticalsep) -- (4*\horizontalsep,0.0*\verticalsep) 
node [black, midway, xshift=8pt, right, label={[rotate=90]center:Inputs}] {};

\draw [-, decorate,decoration={brace, amplitude=5pt,raise=0pt},black] 
(4*\horizontalsep,-0.5*\verticalsep) -- (4*\horizontalsep,-3.0*\verticalsep) 
node [black,midway, xshift=12pt, right, label={[rotate=90,align=center]center:{Hidden \\ cells}}] {};

\draw [-, decorate,decoration={brace, amplitude=5pt,raise=0pt},black] 
(4*\horizontalsep,-3.5*\verticalsep) -- (4*\horizontalsep,-7.4*\verticalsep) 
node [black,midway,xshift=8pt, right, label={[rotate=90]center:Outputs}] {};

\end{tikzpicture}
\end{scriptsize}
    \caption{General ForecastNet structure to provide a forecast $\hat{x}_{t+1:t+4}$ given $\mathbf{x}=x_{t-n_I+1:t}$ as inputs (circles). A hidden cell (squares) comprises some form of feed forward neural network structure. Each hidden cell and output is illustrated with a different shade to indicate heterogeneity over the sequence.} 
    \label{fig:forecastNet}
\end{figure}
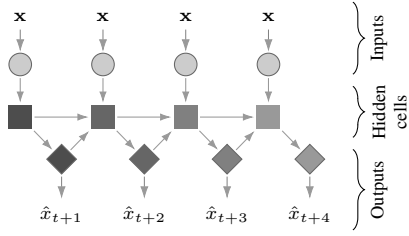 
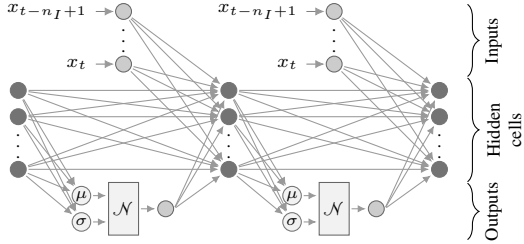
\begin{figure}[t]{}
    \centering



\def\horisep{0.7cm}
\def\vertsep{0.35cm}

\begin{scriptsize}
	\begin{tikzpicture}[->, >={Latex[length=1.1mm, width=1mm, black!40]}, draw=black!40, node distance=\horisep]
	\tikzstyle{neuron}=[circle,fill=black!55,draw=black!55,minimum size=6pt,inner sep=0pt]
	\tikzstyle{param}=[circle,fill=black!5,draw=black!55,minimum size=7pt,inner sep=0pt]
	\tikzstyle{input}=[neuron, fill=black!20, draw=black!60,];
	\tikzstyle{output}=[neuron, fill=black!20, draw=black!60];
	\tikzstyle{likelihood}=[rectangle, fill=black!5, draw=black!60, minimum width=9pt, minimum height=2*\vertsep, inner sep=2pt];
	\tikzstyle{dots}=[minimum size=9pt,inner sep=0pt]
	\tikzstyle{vdots}=[minimum size=9pt,inner sep=0pt, label={[yshift=-0.45cm]$\vdots$}]
	\tikzstyle{annot} = [text width=4em, text centered]
	\pgfsetshortenstart{0.7pt}
	\pgfsetshortenend{0.7pt}

	\node[neuron] (L1-0) at (0,-0*\vertsep) {};
	\node[neuron] (L1-1) at (0,-1*\vertsep) {};
	\node[vdots]  (L1-2) at (0,-2*\vertsep) {};
	\node[neuron] (L1-3) at (0,-3*\vertsep) {};
	
	\node[input, pin={[pin distance=8pt, pin edge={<-}]left:$x_{t-n_I+1}$}] (L2-0) at (2*\horisep, 3*\vertsep) {};
	\node[vdots] (L2-1) at (2*\horisep,2*\vertsep) {};
	\node[input, pin={[pin distance=8pt, pin edge={<-}]left:$x_{t}$}] (L2-2) at (2*\horisep, 1*\vertsep) {};
	
	\node[param] (L2-3) at (1.2*\horisep, -4*\vertsep) {{\tiny$\mu$}};
	\node[param] (L2-4) at (1.2*\horisep, -5*\vertsep) {{\tiny$\sigma$}};
	\node[likelihood] (L2-5) at (2*\horisep, -4.5*\vertsep) {{\scriptsize$\mathcal{N}$}};
	\node[output] (L2-6) at (2.8*\horisep, -4.5*\vertsep) {};
	
	\path (L1-0) edge (L2-3);
	\path (L1-1) edge (L2-3);
	\path (L1-3) edge (L2-3);
	
	\path (L1-0) edge (L2-4);
	\path (L1-1) edge (L2-4);
	\path (L1-3) edge (L2-4);
	
	\draw[->] (L2-3.east) -- (L2-3.east -| L2-5.west);
	\draw[->] (L2-4.east) -- (L2-4.east -| L2-5.west);
	\path (L2-5) edge (L2-6);

	\node[neuron] (L3-0) at (4*\horisep,-0*\vertsep) {};
	\node[neuron] (L3-1) at (4*\horisep,-1*\vertsep) {};
	\node[vdots] (L3-2) at (4*\horisep,-2*\vertsep) {};
	\node[neuron] (L3-3) at (4*\horisep,-3*\vertsep) {};
	
	\node[input, pin={[pin distance=8pt, pin edge={<-}]left:$x_{t-n_I+1}$}] (L4-0) at (6*\horisep, 3*\vertsep) {};
	\node[vdots]  (L4-1) at (6*\horisep,2*\vertsep) {};
	\node[input, pin={[pin distance=8pt, pin edge={<-}]left:$x_{t}$}] (L4-2) at (6*\horisep, 1*\vertsep) {};
	
	\node[param] (L4-3) at (5.2*\horisep, -4*\vertsep) {{\tiny$\mu$}};
	\node[param] (L4-4) at (5.2*\horisep, -5*\vertsep) {{\tiny$\sigma$}};
	\node[likelihood] (L4-5) at (6*\horisep, -4.5*\vertsep) {{\scriptsize$\mathcal{N}$}};
	\node[output] (L4-6) at (6.8*\horisep, -4.5*\vertsep) {};
	
	\path (L1-0) edge (L3-0);
	\path (L1-1) edge (L3-0);
	\path (L1-3) edge (L3-0);
	\path (L1-0) edge (L3-1);
	\path (L1-1) edge (L3-1);
	\path (L1-3) edge (L3-1);
	\path (L1-0) edge (L3-3);
	\path (L1-1) edge (L3-3);
	\path (L1-3) edge (L3-3);
	
	\path (L2-0) edge (L3-0);
	\path (L2-0) edge (L3-1);
	\path (L2-0) edge (L3-3);
	
	\path (L2-2) edge (L3-0);
	\path (L2-2) edge (L3-1);
	\path (L2-2) edge (L3-3);
	
	\path (L2-6.east) edge (L3-0);
	\path (L2-6.east) edge (L3-1);
	\path (L2-6.east) edge (L3-3);

	\path (L3-0) edge (L4-3);
	\path (L3-1) edge (L4-3);
	\path (L3-3) edge (L4-3);
	
	\path (L3-0) edge (L4-4);
	\path (L3-1) edge (L4-4);
	\path (L3-3) edge (L4-4);
	
	\draw[->] (L4-3.east) -- (L4-3.east -| L4-5.west);
	\draw[->] (L4-4.east) -- (L4-4.east -| L4-5.west);
	\path (L4-5) edge (L4-6);

	\node[neuron] (L5-0) at (8*\horisep,-0*\vertsep) {};
	\node[neuron] (L5-1) at (8*\horisep,-1*\vertsep) {};
	\node[vdots] (L5-2) at (8*\horisep,-2*\vertsep) {};
	\node[neuron] (L5-3) at (8*\horisep,-3*\vertsep) {};
	
	\path (L3-0) edge (L5-0);
	\path (L3-1) edge (L5-0);
	\path (L3-3) edge (L5-0);
	\path (L3-0) edge (L5-1);
	\path (L3-1) edge (L5-1);
	\path (L3-3) edge (L5-1);
	\path (L3-0) edge (L5-3);
	\path (L3-1) edge (L5-3);
	\path (L3-3) edge (L5-3);
	
	\path (L4-0) edge (L5-0);
	\path (L4-0) edge (L5-1);
	\path (L4-0) edge (L5-3);
	
	\path (L4-2) edge (L5-0);
	\path (L4-2) edge (L5-1);
	\path (L4-2) edge (L5-3);
	
	\path (L4-6.east) edge (L5-0);
	\path (L4-6.east) edge (L5-1);
	\path (L4-6.east) edge (L5-3);
	
	\definecolor{labelColor}{rgb}{0.0,0.0,0.0} 

	%
	%

	\draw [-, decorate,decoration={brace, amplitude=5pt,raise=0pt},black] 
	(8.5*\horisep,3.3*\vertsep) -- (8.5*\horisep,0.5*\vertsep) 
	node [black, midway, xshift=8pt, right, label={[rotate=90]center:Inputs}] {};
	
	\draw [-, decorate,decoration={brace, amplitude=5pt,raise=0pt},black] 
	(8.5*\horisep,0.5*\vertsep) -- (8.5*\horisep,-3.5*\vertsep) 
	node [black,midway, xshift=12pt, right, label={[rotate=90,align=center]center:{Hidden \\ cells}}] {};
	
	\draw [-, decorate,decoration={brace, amplitude=5pt,raise=0pt},black] 
	(8.5*\horisep,-3.5*\vertsep) -- (8.5*\horisep,-5.7*\vertsep) 
	node [black,midway,xshift=8pt, right, label={[rotate=90]center:Outputs}] {};

%
%
%
%
	\end{tikzpicture}
\end{scriptsize}
    \caption{An example of a simple form of ForecastNet with a single densely connected hidden layer in the hidden cell (more complex structures can be chosen for the hidden cell). Outputs take the form of a normal or Gaussian distribution. Note that though the architecture at each sequence step is identical, the weights will differ (i.e. they are \textit{not} recurrent).}
    \label{fig:forecastNetBasic}
\end{figure}

\subsection{Inputs}

ForecastNet's inputs $\mathbf{x}=x_{t-n_I+1:t}$ are a set of lagged values of the dependent variable. The dependent variable can be univariate or multivariate. The set of inputs are presented to every hidden cell in the network as illustrated in \figurename{~\ref{fig:forecastNet}} and \figurename{~\ref{fig:forecastNetBasic}}.

\subsection{Hidden Cells}

A hidden cell represents some form of feed forward neural network such as a multi-layered perceptron (MLP), a CNN, or self-attention. Each hidden cell can be heterogeneous in terms of architecture. As a feed-forward network, even if the architecture of each hidden cell is identical (as used in this study), each cell is provided with its own unique set of parameters. This is in contrast to RNNs where the RNN cell architecture and parameters are duplicated at each sequence step. ForecastNet is thus sequential, but it is \textit{not} recurrent.

The hidden cells are intended to model the time-series dynamics. Links between hidden cells model local dynamics and cells together model longer-term dynamics. 


\subsection{Outputs}

Each output in ForecastNet provides a forecast one-step into the future. The deeper the network, the more outputs there are. ForecastNet thus \textit{naturally scales in complexity with increased forecast reach}. 

Using the idea of mixture density networks \citep{flunkert2017deepar, bishop1994mixture}, each output models a probability distribution. In this study the normal distribution is used as illustrated in \figurename{~\ref{fig:forecastNetBasic}}. The mean and standard deviation of the normal distribution at output layer $l$ are given by
\begin{align}
&\mu^{[l]} = W_{\mu}^{[l]T} \mathbf{a}^{[l-1]} + \mathbf{b}^{[l]}_{\mu} \\
&\sigma^{[l]} = \log( 1 + \exp(W_{\sigma}^{[l]T} \mathbf{a}^{[l-1]} + \mathbf{b}^{[l]}_{\sigma})),
\end{align}
where $\mathbf{a}^{[l-1]}$ are the outputs of the previous hidden cell, $W_{\mu}^{[l]T}$ and $\mathbf{b}^{[l]}_{\mu}$ are the weights and biases of the mean's layer, and $W_{\sigma}^{[l]T}$ and $\mathbf{b}^{[l]}_{\sigma}$ are the weights and biases of the standard deviation's layer. 

The forecast associated with layer $l$ is produced by sampling from $\mathcal{N}(\mu^{[l]},\sigma^{[l]})$. During forecasting, the sampled forecast is fed to the next layer. During training, the forecast is fully observable through the training data. The network is trained with gradient descent to optimise the normal distribution log-likelihood function \citep{flunkert2017deepar}.

The mixture density output can be replaced with a linear output and the squared error loss function. No uncertainty will be available in this form, however the model be required to optimise over two parameters. This form is demonstrated in this study as one of the variations of ForecastNet.

\section{ForecastNet Properties}

\subsection{Time-Variance}

A time-invariant system is defined as \textit{a system for which a time shift of the input sequence causes a corresponding shift in the output sequence} \cite{oppenheim2009discrete}. 

\begin{theorem}
    ForecastNet with inputs $\mathbf{x}_t$, hidden states $\mathbf{h}_t$, and outputs $\mathbf{y}_t$ given by
    \begin{align}
    \label{eq:fn1}
    & \mathbf{y}_t = f_t(\mathbf{h}_t) \\ 
    \label{eq:fn2}
    & \mathbf{h}_t = g_t(\mathbf{x}_t, \mathbf{h}_{t-1}, \mathbf{y}_{t-1})
    \end{align}
    is not time invariant.
\end{theorem}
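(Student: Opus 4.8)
The plan is to work directly from the definition of time invariance recalled just above: a system $S$ is time invariant iff a shift of the input sequence produces the same shift of the output, i.e. $S$ commutes with the lag operator $T_\tau$ given by $(T_\tau\mathbf{u})_t = \mathbf{u}_{t-\tau}$. To disprove this for ForecastNet it is enough to exhibit a single input and a single lag that violate it, and the cleanest choice is the \emph{constant} input $\mathbf{x}_t = \mathbf{0}$ for all $t$ (under the fixed conventions $\mathbf{h}_0=\mathbf{0}$, $\mathbf{y}_0=\mathbf{0}$). A constant sequence is its own shift, $T_\tau\mathbf{x}=\mathbf{x}$ for every $\tau$, so if ForecastNet were time invariant we would get $T_\tau(S\mathbf{x}) = S(T_\tau\mathbf{x}) = S\mathbf{x}$ for all $\tau$; hence the output sequence $S\mathbf{x}$ would have to be constant in $t$.

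The second step is to show that $S\mathbf{x}$ is \emph{not} constant. Unrolling the recursion~\eqref{eq:fn2} together with the readout~\eqref{eq:fn1} on this input gives $\mathbf{h}_1 = g_1(\mathbf{0},\mathbf{0},\mathbf{0})$, $\mathbf{y}_1 = f_1(\mathbf{h}_1)$, then $\mathbf{h}_2 = g_2(\mathbf{0},\mathbf{h}_1,\mathbf{y}_1)$, $\mathbf{y}_2 = f_2(\mathbf{h}_2)$, and so on. Because each hidden cell and each output map carries its own parameters, the maps $f_1,f_2,\dots$ and $g_1,g_2,\dots$ are genuinely distinct; in particular, outside a degenerate (measure-zero) set of weights and biases one has $f_1(\mathbf{h}_1)\neq f_2(\mathbf{h}_2)$ — for instance the differing output biases alone already separate the two values generically. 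This contradicts the previous paragraph, so ForecastNet is not time invariant. A more self-contained variant, if one wants to avoid genericity through biases, is to fix indices $t_1\neq t_2$ and a state $\mathbf{h}^\star$ with $f_{t_1}(\mathbf{h}^\star)\neq f_{t_2}(\mathbf{h}^\star)$ (such a pair exists, else all output maps would agree on their entire domain and the per-cell parameters would be redundant), then build an input that drives the recursion to $\mathbf{h}_{t_1}=\mathbf{h}^\star$ and whose lag-$(t_2-t_1)$ shift reaches $\mathbf{h}_{t_2}=\mathbf{h}^\star$, so that the two outputs are again not shifts of one another.

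The main obstacle is precisely the gap this has to bridge: the \emph{realization} is manifestly time varying (the maps $f_t,g_t$ are indexed by $t$), but that does not by pure logic force the \emph{input--output map} to be time varying, since the $t$-dependence could in principle cancel. The proof must rule this out, and the honest way to do so is to note that such cancellation can occur only on a degenerate set of parameters; for every other parameterization — in particular the randomly initialized networks actually used — ForecastNet is genuinely time variant. I would therefore present the conclusion with this "for all but degenerate parameters" reading, which is exactly the claim that ForecastNet is \emph{not}, in general, a time-invariant model.
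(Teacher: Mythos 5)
Your proof is correct in substance but takes a genuinely different route from the paper's. The paper argues directly from the definition with a general shifted input: it sets $\mathbf{x'}_t=\mathbf{x}_{t-t_0}$, assumes $\mathbf{h'}_t=\mathbf{h}_{t-t_0}$, writes out $\mathbf{y}_{t-t_0}=f_{t-t_0}(g_{t-t_0}(\mathbf{x}_{t-t_0},\mathbf{h}_{t-t_0-1},\mathbf{y}_{t-t_0-1}))$ versus $\mathbf{y'}_t=f_t(g_t(\mathbf{x}_{t-t_0},\mathbf{h}_{t-t_0-1},\mathbf{y'}_{t-1}))$, and concludes $\mathbf{y}_{t-t_0}\neq\mathbf{y'}_t$ simply because the time-indexed maps $f_{t-t_0}\circ g_{t-t_0}$ and $f_t\circ g_t$ are different functions; this mirrors its Theorem~2 proof for the RNN, where the same substitution yields equality. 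You instead exploit a fixed point of the lag operator: a constant (zero) input is its own shift, so time invariance would force a constant output sequence, and you then show the unrolled outputs $f_1(\mathbf{h}_1), f_2(\mathbf{h}_2),\dots$ differ for all but a degenerate set of parameters. What your approach buys is an explicit treatment of the logical gap that the paper glosses over: a time-varying \emph{realization} ($f_t,g_t$ indexed by $t$) does not by itself imply a time-varying \emph{input--output map}, since the $t$-dependence could cancel (e.g.\ if all cells carried identical parameters, ForecastNet would in fact be time-invariant, so the theorem as literally stated needs a non-degeneracy reading). The paper's proof implicitly relies on the same "the per-time functions genuinely differ" assumption without stating it, and additionally assumes rather than derives $\mathbf{h'}_t=\mathbf{h}_{t-t_0}$; your constant-input construction avoids that assumption and makes the generic-parameters caveat explicit, at the cost of a genericity argument (or your alternative existence-of-$\mathbf{h}^\star$ variant, which needs the mild reachability claim that some input drives the state to $\mathbf{h}^\star$ at the two chosen times). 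In short: both proofs are of the same depth, but yours is the more careful statement of what is actually being proved, while the paper's is the more direct formula comparison that parallels its RNN result.
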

\begin{proof}
    Given two inputs, $\mathbf{x}_t$ and $\mathbf{x'}_t$, two outputs $\mathbf{y}_t$ and $\mathbf{y'}_t$, and two hidden states $\mathbf{h}_t$ and $\mathbf{h'}_t$. Let $\mathbf{x'}_t$ be $\mathbf{x}_t$, shifted by $t_0$ such that $\mathbf{x'}_t = \mathbf{x}_{t-t_0}$. Similarly, let $\mathbf{h'}_t = \mathbf{h}_{t-t_0}$. Time-invariance requires that $\mathbf{y}_{t-t_0} = \mathbf{y'}_t$. From (\ref{eq:fn1}) and (\ref{eq:fn2}), $\mathbf{y}_{t-t_0}$ is given by
    \begin{align*}
    \mathbf{y}_{t-t_0} = 
    f_{t-t_0}(g_{t-t_0}(\mathbf{x}_{t-t_0}, \mathbf{h} _{t-t_0-1}, \mathbf{y}_{t-t_0-1}))
    \end{align*}
    and $\mathbf{y'}_t$ is given by
    \begin{align*}
    \mathbf{y'}_t &= f_t(g_t(\mathbf{x'}_t, \mathbf{h'}_{t-1}, \mathbf{y'}_{t-1})) \\
    & = f_t(g_t(\mathbf{x}_{t-t_0}, \mathbf{h}_{t-t_0-1}, \mathbf{y'}_{t-1})).
    \end{align*}
    Thus, $\mathbf{y}_{t-t_0} \neq \mathbf{y'}_t$ and ForecastNet is not time-invariant.
\end{proof}
ForecastNet is not time-invariant as its parameters (and optionally architecture) vary in time (over the sequence of inputs and outputs). This is compared with the RNN that has fixed parameters which are reused each time step, resulting in a time-invariant model.

\begin{theorem}
    A RNN with inputs $\mathbf{x}_t$, hidden states $\mathbf{h}_t$, and outputs $\mathbf{y}_t$ given by
    \begin{align}
    \label{eq:rnn1}
    & \mathbf{y}_t = f(\mathbf{h}_t) \\ 
    \label{eq:rnn2}
    & \mathbf{h}_t = g(\mathbf{x}_t, \mathbf{h}_{t-1})
    \end{align}
    is time-invariant when $\mathbf{h}_t$ is initialised to some initial value $\mathbf{h}_0$ immediately before the first input is received.
\end{theorem}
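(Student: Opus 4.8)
The plan is to mirror the structure of the proof of Theorem~1, but to exploit the fact that here the maps $f$ and $g$ carry no time index. Fix an input sequence $\mathbf{x}_t$ that begins at some step, say $t=1$, with the hidden state initialised to $\mathbf{h}_0$ immediately before $\mathbf{x}_1$ is received, producing hidden states $\mathbf{h}_t$ and outputs $\mathbf{y}_t$ via (\ref{eq:rnn1})--(\ref{eq:rnn2}), so that $\mathbf{h}_0$ sits at index $0$. Let $t_0$ be an arbitrary shift and define the shifted input $\mathbf{x}'_t = \mathbf{x}_{t-t_0}$, which now begins at $t = 1+t_0$; by the stated hypothesis the hidden state of this run is initialised to the same $\mathbf{h}_0$ immediately before $\mathbf{x}'_{1+t_0}$, i.e.\ $\mathbf{h}'_{t_0}=\mathbf{h}_0$. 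The goal is to show $\mathbf{y}'_t = \mathbf{y}_{t-t_0}$ for all $t$, which is exactly the definition of a time-invariant system quoted from \cite{oppenheim2009discrete}.

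The key step is a short induction on $t \ge t_0$ establishing $\mathbf{h}'_t = \mathbf{h}_{t-t_0}$. The base case $t=t_0$ holds because both sides equal $\mathbf{h}_0$. For the inductive step, assuming $\mathbf{h}'_{t-1} = \mathbf{h}_{t-1-t_0}$, we apply (\ref{eq:rnn2}) and the definition of $\mathbf{x}'$:
\[
\mathbf{h}'_t = g(\mathbf{x}'_t, \mathbf{h}'_{t-1}) = g(\mathbf{x}_{t-t_0}, \mathbf{h}_{t-1-t_0}) = \mathbf{h}_{t-t_0},
\]
where the last equality is again (\ref{eq:rnn2}). This is precisely where the argument departs from Theorem~1: because $g$ does not depend on $t$, the same transition map acts in both runs, so no discrepancy is introduced. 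Passing through (\ref{eq:rnn1}) and using that $f$ is likewise time-independent gives $\mathbf{y}'_t = f(\mathbf{h}'_t) = f(\mathbf{h}_{t-t_0}) = \mathbf{y}_{t-t_0}$, which completes the proof.

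The main thing to be careful about is the role of the initialisation hypothesis, which does all the work and is the natural place for the argument to be challenged. If instead $\mathbf{h}_0$ were pinned to a fixed absolute time rather than to the position of the first input, the warm-up transient generated by $\mathbf{h}_0$ would sit at a different relative offset in the two runs, the base case would fail, and the system would be time-invariant only asymptotically (after the transient decays, e.g.\ for contractive $g$). I would state this caveat explicitly, and also note that we are treating the RNN as a system acting on sequences that start at a fixed point and are shifted \emph{together with} their initialisation, rather than on bi-infinite sequences, so that the definition applies cleanly. Beyond that, the verification is entirely routine.
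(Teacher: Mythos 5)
Your proof is correct and follows the same overall strategy as the paper's: shift the input by $t_0$, run both systems, and compare $\mathbf{y}'_t$ with $\mathbf{y}_{t-t_0}$ using the fact that $f$ and $g$ carry no time index. The one substantive difference is that the paper simply stipulates the hidden-state relation, writing ``let $\mathbf{h}'_t = \mathbf{h}_{t-t_0}$'' as part of the setup, and only discusses the role of the initialisation informally in the remark after the proof (the padding scenario in which $\mathbf{h}'_t \neq \mathbf{h}_{t-t_0}$ could arise). You instead \emph{derive} $\mathbf{h}'_t = \mathbf{h}_{t-t_0}$ by induction on $t \ge t_0$, with the base case supplied exactly by the hypothesis that $\mathbf{h}_0$ is installed immediately before the first input of each run. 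This is a genuine strengthening: it makes explicit that the initialisation hypothesis in the theorem statement is what licenses the hidden-state identity, rather than leaving it as an unexamined assumption, and it pinpoints where the time-independence of $g$ enters (the inductive step) in contrast to the time-indexed $g_t$ of Theorem~1. Your closing caveat about pinning $\mathbf{h}_0$ to absolute time versus to the position of the first input is precisely the content of the paper's post-proof remark, so nothing in your treatment conflicts with the paper; it simply moves that observation from a remark into the proof where it belongs.
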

\begin{proof}
    Given two inputs, $\mathbf{x}_t$ and $\mathbf{x'}_t$, two outputs $\mathbf{y}_t$ and $\mathbf{y'}_t$, and two hidden states $\mathbf{h}_t$ and $\mathbf{h'}_t$. Let $\mathbf{x'}_t$ be $\mathbf{x}_t$, shifted by $t_0$ such that $\mathbf{x'}_t = \mathbf{x}_{t-t_0}$. Similarly, let $\mathbf{h'}_t = \mathbf{h}_{t-t_0}$. Time-invariance requires that $\mathbf{y}_{t-t_0} = \mathbf{y'}_t$. From (\ref{eq:rnn1}) and (\ref{eq:rnn2}), $\mathbf{y}_{t-t_0}$ is given by
    \begin{align*}
    \mathbf{y}_{t-t_0} = f(g(\mathbf{x}_{t-t_0}, \mathbf{h}_{t-t_0-1}))
    \end{align*}
    and $\mathbf{y'}_t$ is given by
    \begin{align*}
    \mathbf{y'}_t &= f(g(\mathbf{x'}_t, \mathbf{h'}_{t-1})) \\
    & = f(g(\mathbf{x}_{t-t_0}, \mathbf{h}_{t-t_0-1})).
    \end{align*}
    Thus $\mathbf{y}_{t-t_0} = \mathbf{y'}_t$ and the RNN is time-invariant.
\end{proof}

Note that a requirement for time-invariance in the RNN is that $\mathbf{h}_t$ is initialised to $\mathbf{h}_0$ before the first input is received. If $\mathbf{h}_t$ were initialised several time-steps before the first input was received, a set of zeros (padding) would have to be provided as inputs until the first $\mathbf{x}_t$ were received. This could result in $\mathbf{h'}_t \neq \mathbf{h}_{t-t_0}$.


\subsection{Interleaved Outputs}
\label{sec:effectsOfInterleavedOutputs}
The vanishing/exploding gradient problem has been referred to as ``deep learning's fundamental problem'' \citep{Schmidhuber2015Deep}. The problem stems from the repeated application of the chain rule of calculus in computing the gradient \citep{Caterini2018Deep}. The chain rule produces a chain of \textit{factors}. Especially long chains associated with deep networks can either vanish to zero or diverge (explode). By \textit{interleaving outputs} between hidden layers in ForecastNet, the chain is broken into a \textit{sum of terms}. This sum of terms is more stable than a product of factors. The intuition is that interleaved outputs provide localised information to inner hidden layers of the network during training. This decreases the effective depth of the network.
    
To show this, consider an $L$-layered ForecastNet with a single hidden neuron and a single output neuron. A summation results when the output of hidden layer is split between the next hidden layer and the next output. The repeated application of the chain rule results in the following expression
\begin{align}
\label{eq:chainRule2}
\frac{\partial \mathcal{L}}{\partial W^{[l]}} 
&= 	
\sum_{k=0}^{\frac{L-1-l}{2}}
\frac{\partial \mathcal{L}}{\partial \mathbf{z}^{[l+2k+1]}} 
\frac{\partial \mathbf{z}^{[l+2k+1]}}{\partial \mathbf{a}^{[l+2k]}}
\Psi_k
\frac{\partial \mathbf{a}^{[l]}}{\partial W^{[l]}}
\end{align}
where
\begin{align}
\Psi_k = 
\begin{cases}
1 & k=0 \\
\displaystyle \prod_{j=1}^{k} \frac{\partial \mathbf{z}^{[l+2j]}}{\partial \mathbf{a}^{[l+2(j-1)]}}  & k>0
\end{cases}
\end{align}
\begin{proof}
    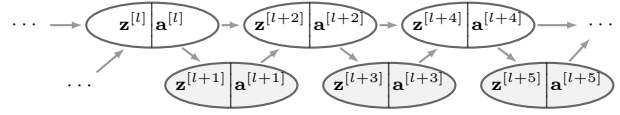
\begin{figure}[t]
        \centering

\def\horizontalsep{1.05cm}
\def\verticalsep{0.8cm}

\begin{scriptsize}
\begin{tikzpicture}[->,>={Latex[length=1.5mm, width=1mm, black!40]},draw=black!40, thick]
	\tikzstyle{every pin edge}=[<-]
	\tikzstyle{neuron}=[ellipse, fill=black!0, draw=black!60, minimum width=50pt, minimum height=17pt,inner sep=0pt]
    \tikzstyle{dots}=[ellipse, fill=black!0, draw=black!60, minimum width=50pt, minimum height=17pt,inner sep=0pt]
	\tikzstyle{tapped neuron}=[ellipse, fill=black!5, draw=black!60, minimum width=50pt, minimum height=17pt, thick]
	\pgfsetshortenstart{0.8pt}
	\pgfsetshortenend{0.8pt}

   	\node[] (L1) at (0.4*\horizontalsep,0) {$\cdots$};
   	\node[] (L2) at (1.1*\horizontalsep,-\verticalsep) {$\cdots$};
	
	\node[neuron, label=center:$\mathbf{z}^{[l]} \bigg| \mathbf{a}^{[l]}$] (L3) at (2*\horizontalsep, 0) {};
	\node[tapped neuron, label=center:$\mathbf{z}^{[l+1]} \bigg| \mathbf{a}^{[l+1]}$] (L4) at (3*\horizontalsep, -\verticalsep) {};
	\path (L3) edge (L4);
	\path (L2) edge (L3);
	\path (L1) edge (L3);
	
	\node[neuron, label=center:$\mathbf{z}^{[l+2]} \bigg| \mathbf{a}^{[l+2]}$] (L5) at (4*\horizontalsep, 0) {};
	\node[tapped neuron, label=center:$\mathbf{z}^{[l+3]} \bigg| \mathbf{a}^{[l+3]}$] (L6) at (5*\horizontalsep, -\verticalsep) {};
	\path (L5) edge (L6);
	\path (L4) edge (L5);
	\path (L3) edge (L5);
	
	\node[neuron, label=center:$\mathbf{z}^{[l+4]} \bigg| \mathbf{a}^{[l+4]}$] (L7) at (6*\horizontalsep, 0) {};
	\node[tapped neuron, label=center:$\mathbf{z}^{[l+5]} \bigg| \mathbf{a}^{[l+5]}$] (L8) at (7*\horizontalsep, -\verticalsep) {};
	\path (L7) edge (L8);
	\path (L6) edge (L7);
	\path (L5) edge (L7);

    \node[] (L9) at (7.7*\horizontalsep,0) {$\cdots$};
    
	\path (L7) edge (L9);
	\path (L8) edge (L9);
	
\end{tikzpicture}
\end{scriptsize}
        \caption{Figure for the derivation of equation (\ref{eq:chainRule2}). The top row of nodes are hidden neurons. The bottom row of nodes are output neurons. Inputs are not shown.}
        \label{fig:chainRuleDerivation2}
    \end{figure}
    Consider an $L$-layered ForecastNet with a single hidden neuron in the hidden cell and a single linear output neuron as illustrated in \figurename{~\ref{fig:chainRuleDerivation2}} (the inputs are not shown as they do not contribute to the error backpropagated through hidden cells). For some layer $l$ in this network, $W^{[l]}$ is the weight matrix, $\bar{b}^{[l]}$ is the bias vector, $\mathbf{a}^{[l]}$ is the output vector, and $\mathbf{z}^{[l]}=W^{[l]T} \mathbf{a}^{[l-1]} + \bar{b}^{[l]}$. Using the chain rule of calculus, the derivative of the loss function $\mathcal{L}$ with respect to the weights $W^{[l]}$ at layer $l$ is given by
    \begin{align*}
    \frac{\partial \mathcal{L}}{\partial W^{[l]}} 
    &= 	\frac{\partial \mathcal{L}}{\partial \mathbf{a}^{[l]}} 
    \frac{\partial \mathbf{a}^{[l]}}{\partial W^{[l]}}
    \end{align*}
    Hidden layer $l$ links to layers $l+1$ and $l+2$. Thus, the derivative with respect to $\mathbf{a}^{[l]}$ is expanded as follows
    \begin{align*}
    \frac{\partial \mathcal{L}}{\partial W^{[l]}} 
    &= \left(
    \frac{\partial \mathcal{L}}{\partial \mathbf{z}^{[l+1]}} \frac{\partial \mathbf{z}^{[l+1]}}{\partial \mathbf{a}^{[l]}} +
    \frac{\partial \mathcal{L}}{\partial \mathbf{z}^{[l+2]}} \frac{\partial \mathbf{z}^{[l+2]}}{\partial \mathbf{a}^{[l]}}
    \right)
    \frac{\partial \mathbf{a}^{[l]}}{\partial W^{[l]}} \\
    &= 	
    \frac{\partial \mathcal{L}}{\partial \mathbf{z}^{[l+1]}} 
    \frac{\partial \mathbf{z}^{[l+1]}}{\partial \mathbf{a}^{[l]}}
    \frac{\partial \mathbf{a}^{[l]}}{\partial W^{[l]}}
    +
    \frac{\partial \mathcal{L}}{\partial \mathbf{z}^{[l+2]}} 
    \frac{\partial \mathbf{z}^{[l+2]}}{\partial \mathbf{a}^{[l]}}
    \frac{\partial \mathbf{a}^{[l]}}{\partial W^{[l]}}
    \end{align*}
    Layer $l+2$ links to layers $l+3$ and $l+4$ and $\nicefrac{\partial \mathcal{L}}{\partial \mathbf{a}^{[l+2]}}$ can be expanded in a similar manner to the above. This expansion process is continued until the final output layer $L$ is reached, which produces (\ref{eq:chainRule2}).
\end{proof}

Similarly, consider a chain of neurons in a standard $L$-layered feed-forward neural network with loss function $\mathcal{L}$. Repeated application of the chain rule of calculus results in the following product
\begin{align}
\label{eq:chainRule1}
\frac{\partial \mathcal{L}}{\partial W^{[l]}} 
= 
&\frac{\partial \mathcal{L}}{\partial \mathbf{a}^{[L]}} 
\frac{\partial \mathbf{a}^{[L]}}{\partial \mathbf{z}^{[L]}}
\frac{\partial \mathbf{z}^{[L]}}{\partial \mathbf{a}^{[L-1]}}
\frac{\partial \mathbf{a}^{[L-1]}}{\partial \mathbf{z}^{[L-1]}}
\cdots \nonumber\\
&\frac{\partial \mathbf{z}^{[l+2]}}{\partial \mathbf{a}^{[l+1]}}
\frac{\partial \mathbf{a}^{[l+1]}}{\partial \mathbf{z}^{[l+1]}}
\frac{\partial \mathbf{z}^{[l+1]}}{\partial \mathbf{a}^{[l]}}
\frac{\partial \mathbf{a}^{[l]}}{\partial \mathbf{z}^{[l]}}
\frac{\partial \mathbf{z}^{[l]}}{\partial W^{[l]}}
\end{align}
This equation can be expressed in the form $\lambda^L$ \citep{Caterini2018Deep, goodfellow2016deep}. If $|\lambda|<1$, the term vanishes towards $0$ as $L$ grows. If $|\lambda|>1$, the term explodes as $L$ grows.

Equation (\ref{eq:chainRule1}) contains a product of factors whereas (\ref{eq:chainRule2}) contains a sum of terms. A sum of terms with values less than one does not tend to zero (vanish) as a product of factors with values less than one would.

The interleaved outputs mitigate, but do not eliminate the vanishing gradient problem. The term $\Psi_k$ is a product of derivatives formed by the chain rule of calculus. The number of factors in this product grows proportional to $k$. For a distant layer where $k$ is large, $\Psi_k$ will have many factors. The result is that gradients back-propagated from distant layers are still susceptible to vanishing gradient problems. However, for nearby outputs where $k$ is small, $\Psi_k$ will have few factors and so gradients from \textit{nearby} outputs are less likely to experience vanishing gradient problems. Thus, nearby outputs can provide guidance to local parameters during training, resulting in improved convergence as the effective depth of the network is reduced.


\section{Material and Methods}

\subsection{Datasets}

A set of models are compared on a synthetic dataset and nine real-world datasets sourced from various domains. These include weather, environmental, energy, aquaculture, and meteorological domains. Datasets with seasonal components and complex trends are hand-picked to ensure that they provide a sufficiently challenging problem. Properties such as varying seasonal amplitude, varying seasonal shape, and noise were sought. For example, the shape of the seasonal cycle is non-stationary and changes over time in most datasets. Additionally, properties such as seasonality assist in demonstrating the ability for models to learn long-term dependencies in the data.

The synthetic dataset is used to provide a baseline. The data is generated according to
$x_t = 2 \sin \left( 2 \pi f t \right ) + \nicefrac{1}{3} \sin \left (\nicefrac{2 \pi f t}{5} \right)$, 
where $f$ is the frequency and $t$ denotes time. The low frequency sinusoid emulates a long-term time-varying trend, whereas the high frequency sinusoid emulates seasonality. Models are expected to perform well on this dataset because it contains no noise. The properties of all the datasets are summarised in Table \ref{table:datasetProperties}. In figures and tables the datasets are referred to by their abbreviations.

%

%
\begin{table*}[t]
    \centering
    \caption{Dataset properties.}
    \label{table:datasetProperties}
    \begin{center}
        \begin{scriptsize}
            \begin{tabular}{l l c c c c c c c}
                \toprule
                Dataset & Abbreviation & Resolution & Period & Length & Minimum & Maximum & Mean & Std. Dev. \\
                \midrule
                Synthetic                                  & Synth.	& -         & 20 & 4320 & -2.33 & 2.33 & -0.00 & 1.43 \\
                England temperature \cite{datamarket2018}  & Weath. & Monthly   & 12 & 3261 & 0.10 & 18.80 & 9.27 & 4.75 \\
                River flow \cite{datamarket2018}           & River  & Monthly 	& 12 & 1492 & 3290 & 66500 & 23157.60 & 13087.40 \\
                Electricity \cite{aemo2019}                & Elect. & Hourly 	& 24 & 19224 & 5514 & 14580 & 8709.79 & 1360.29 \\
                Traffic Volume \cite{uci2019}              & Traff.	& Hourly 	& 24 & 8776 & 125 & 7217 & 3269.26 & 2021.57 \\
                Lake levels \cite{datamarket2018}          & Lake	& Monthly 	& 12 & 648 & 10 & 20 & 15.08 & 2.00 \\
                Dissolved Oxygen \cite{dabrowski2018state} & DO	    & Hourly 	& 24 & 2422 & 5.66 & 7.94 & 6.50 & 0.53 \\
                pH \cite{dabrowski2018state}               & pH	    & Hourly 	& 24 & 2422 & 8.07 & 11.15 & 8.56 & 0.21 \\
                Pond temperature \cite{dabrowski2018state} & Temp.	& Hourly 	& 24 & 2422 & 24.38 & 31.97 & 27.74 & 1.85 \\
                Ozone \cite{datamarket2018}                & Ozone	& Monthly 	& 12 & 516 & 266 & 430 & 338.00 & 38.30 \\
                \bottomrule
            \end{tabular}
        \end{scriptsize}
    \end{center}
\end{table*}

\subsection{Models}
\label{sec:models}

Four deep-learning based benchmark models are compared to four variations of ForecastNet. These models include deepAR \citep{flunkert2017deepar}, the TCN \citep{bai2018empirical}, the sequence-to-sequence (encoder-decoder) model \citep{Sutskever2014Sequence}, and the attention model \citep{bahdanau2014neural}. For completeness, a single layer MLP, a free-form seasonal Dynamic Linear Model (DLM) \citep{west2013bayesian}, and a seasonal autoregressive moving average (SARIMA) model are included in the comparison. The DLM (a state space model) and the SARIMA model are well-known statistical models that are used for time-series forecasting \citep{hyndman2018forecasting}.  

Four variations of ForecastNet are tested\footnote{Note that we tested the use of self-attention mechanisms in the hidden cells. We were unable to achieve any significant performance improvement compared with FN.}:
\begin{description}[font=\normalfont, noitemsep, topsep=0pt]
	\item[FN:] Densely connected hidden layers in each hidden cell and a Gaussian mixture density output layer.
	\item[cFN:] CNNs in the hidden cell and a Gaussian mixture density output layer.
	\item[FN2:] This is identical to FN, but with a linear output layer instead of a mixture density output layer.
	\item[cFN2:] This is identical to cFN, but with a linear output layer instead of a mixture density output layer.
\end{description}

The set of models are tested on a datasets that have a seasonal component with period denoted by $\tau$. The number of inputs in all models is set to $2\tau$ and the number of outputs (forecast-steps) is set to $\tau$. Thus, the models are trained to forecast one seasonal cycle ahead in time, given the two previous cycles of data. 

To avoid possible bias between the models, they are configured to be as similar as possible. This is achieved by limiting the number of neurons in the models to similar values. Configuration details are provided in Table \ref{table:modelConfig}. In figures and tables, the sequence-to-sequence model is denoted by `seq2seq' and the attention model is denoted by `Att'. 
\begin{table*}[!t]
	\centering
	\caption{Model configuration. $\tau$ is the seasonal period of the dataset and ReLU is the rectified linear unit.}
	\label{table:modelConfig}
	\begin{center}
		\begin{scriptsize}
			\begin{tabular}{l p{0.9\textwidth}}
				\toprule
				Model & Configuration \\
				\midrule
				FN			
				& Each hidden cell comprises two densely connected hidden layers, each with 24 ReLU neurons. The rectified linear unit (ReLU) with He initialisation \citep{he2015delving} is used as the activation function. \\
				FN2
				& Identical to FN2 but with a linear output layer instead of a mixture density layer. \\
				cFN 	
				& Each hidden cell comprises a convolutional layer with 24 filters, each with a kernel size of 2, followed by a average pooling layer with a pool size of 2 and stride of 1. The convolutional and pooling layers are duplicated and followed by a dense layer with 24 ReLU neurons. \\
				cFN2
				& Identical to cFN2 but with a linear output layer instead of a mixture density layer. \\
				DeepAR		
				& The sequence-to-sequence architecture with single layered LSTMs are used in the encoder and decoder. The mixture density output of the network is a Gaussian (normal) distribution.\\
				TCN
				& The TCN contains a convolutional layer with 32 filters, each with a kernel size of 2 for the Synthetic, Weather, Elect., and River datasets. For the remaining datasets, the TCN contains a convolutional layer with 64 filters, each with a kernel size of 3. The output contains a dense layer with $\tau$ linear units. \\
				Attention
				& Encoder has a bidirectional LSTM and the decoder has a single layered LSTM. The LSTM cells are configured with 24 ReLU units.\\
				Seq2Seq		
				& Encoder and decoder use a single layered LSTM. The LSTM cells are configured with 24 RelU units. \\
				MLP			
				& Feed forward MLP with a single hidden layer comprising $4\tau$ ReLU hidden neurons. A set of $2\tau$ inputs are provided and a set of $\tau$ linear outputs are used. \\
				DLM
				& The DLM used is the free-form seasonal model with a zero order trend component \citep{west2013bayesian}. Model fitting is performed using a modified Kalman filter \citep{pydlm} \\
				SARIMA
				& Standard form SARIMA(p,d,q)(P,D,Q)s with: 
				\textbf{Synthetic}: SARIMA(1,1,1)(1,1,0)20,
				\textbf{Weather}: SARIMA(2,0,3)(0,1,0)12,
				\textbf{Elect.}: SARIMA(3,1,4)(0,1,0)24, 
				\textbf{River}: SARIMA(2,0,4)(0,1,0)12, 
                \textbf{Traff.}: SARIMA(3,1,1)(0,1,0)24,
                \textbf{Lake}: SARIMA(2,0,8)(0,1,0)12, 
                \textbf{DO}: SARIMA(2,0,6)(0,1,0)24,
                \textbf{pH}: SARIMA(4,1,3)(0,1,0)24,
                \textbf{Temp.}: SARIMA(2,1,4)(0,1,0)24, and 
                \textbf{Ozone}: SARIMA(3,0,4)(0,1,0)12, \\
				\bottomrule
			\end{tabular}
		\end{scriptsize}
	\end{center}
\end{table*}

\subsection{Training and Testing}
\label{sec:training}

The datasets are scaled to the range of $[0, 1]$ for training. Each dataset is split into a training and a test set, where the last 10\% of the data are used for the test set. The training and test sets both comprise a long sequence of values. These sequences are converted into a set of samples that the models can process. A sample is extracted using a sliding window of length $3\tau$. The first $2\tau$ samples in this window form the input sequence to the model. The last $\tau$ samples form the forecast target values. The sliding window is slid across the dataset sequence to produce a set of samples. The set of training samples are shuffled prior to training. 

The ADAM algorithm \citep{kingma2014adam} is used to minimise the mean squared error in all machine learning models. The learning rate is searched over the range $10^{-i}, i \in [2,\dots, 6]$. Early stopping is used to address overfitting and defines the number of epochs. The Mean Absolute Scaled Error (MASE) \citep{Hyndman2006Another} is used to evaluate performance of the models. For completeness, results with the Symmetric Mean Absolute Percentage Error (SMAPE) \cite{hyndman2018forecasting} are additionally provided.


\section{Results and Discussion}


\subsection{Time-Invariance Test}

%
\begin{figure}[t]{}
    \centering
    \includegraphics{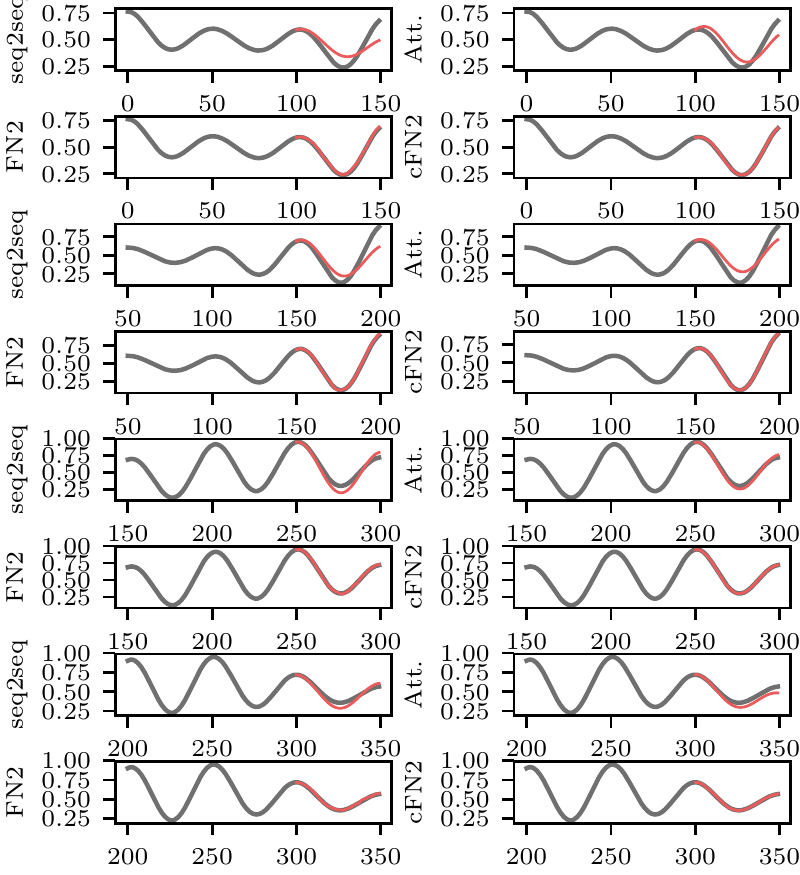}
    \caption{Synthetic dataset forecasts for for the seq2seq, Att., FN2, and cFN2 models at starting time indices 0, 50, 150 and 200.}
    \label{fig:synthetic2_plots}
\end{figure}

Before comparing models on the datasets, the difference between time-invariant and time-variant models is demonstrated using a variation of the synthetic dataset given by
\begin{align*}
x_t = 
\frac{1}{2} \sin \left( \frac{2 \pi f}{6} t \right ) 
\left( 
\frac{3}{5} \sin \left( 2 \pi f t \right ) + 
\frac{1}{5} \sin \left (\frac{2 \pi f}{5} t \right)
\right)
\end{align*}
The second and third sinusoids emulate a seasonal cycle time-varying trend as previously presented. The first sinusoid is included to perform amplitude modulation.

The amplitude modulation repeats every 6 cycles of the seasonal period. A model is only presented with two seasonal cycles as inputs. The model thus never observes a complete pattern, which is the full cycle of the amplitude modulation. A time-variant model's parameters are able to change over time, which enables the model to adapt to the variations in amplitude. A time-invariant model is less agile and is expected to struggle with large amplitude variations.

The seq2seq, attention, FN2, and cFN2 models are trained on the dataset. Results of forecasts from inputs starting at time indices 0, 50, 150 and 200 are presented in \figurename{~\ref{fig:synthetic2_plots}}. As expected, the time-invariant seq2seq model struggles maintaining accurate forecasts when there are large variations in the amplitude. The attention model performs better due to the time-variant attention mechanism. However, the time-variant ForecastNet models adapt well to the large variations in the signal amplitude. We argue that this is due to the time-variant properties of ForecastNet.

\subsection{Model Comparison Error Results}

\begin{table*}[t]{}
    \caption{Average MASE and SMAPE (in brackets) of the models results over the test datasets. The last row indicates the sum of Borda counts of the models over the datasets (a higher value indicates more points in the voting score). Boldface numbers highlight top results.}
    \label{table:mapeResults}
    \setlength\tabcolsep{4.5pt}
    \begin{center}
        \begin{scriptsize}
            \begin{tabular}{l c c c c c c c c c c c}
                \toprule
                & FN & cFN & FN2 & cFN2 & deepAR & Seq2Seq & Attention & TCN & MLP & DLM & SARIMA \\
                \midrule
                Synth. & 0.00 (1.5) & 0.01 (1.7) & \textbf{0.00 (1.3)} & 0.00 (1.4) & 0.03 (2.3) & 0.01 (1.7) & 0.04 (2.8) & 0.05 (3.0) & 0.01 (1.6) & 0.64 (23.0) & 0.29 (11.7) \\
                Weath. & 0.46 (17.1) & 0.40 (15.3) & 0.46 (16.7) & \textbf{0.31 (12.3)} & 0.46 (17.3) & 0.43 (16.0) & 0.37 (14.3) & 0.47 (17.3) & 0.47 (17.1) & 0.52 (18.9) & 0.61 (22.0) \\
                Elect. & 1.12 (12.5) & 1.04 (11.9) & 0.89 (10.9) & \textbf{0.54 (6.7)} & 1.77 (18.2) & 1.00 (11.7) & 1.39 (14.7) & 1.09 (12.9) & 1.34 (15.4) & 1.73 (19.1) & 1.26 (15.1) \\
                River & 0.71 (24.1) & 0.66 (23.6) & 0.66 (25.2) & \textbf{0.39 (15.4)} & 0.86 (28.7) & 0.57 (21.9) & 0.53 (19.4) & 0.85 (30.5) & 0.85 (30.3) & 0.77 (26.8) & 0.87 (29.5) \\
                Traff. & 2.23 (46.0) & 1.95 (40.3) & 1.44 (32.5) & \textbf{0.82 (21.2)} & 2.01 (40.3) & 1.78 (36.6) & 1.94 (39.3) & 2.20 (44.4) & 2.36 (47.8) & 2.40 (43.5) & 2.32 (42.9) \\
                Lake & \textbf{1.42 (10.1)} & 1.61 (12.0) & 1.58 (12.3) & 1.69 (13.0) & 1.61 (12.3) & 1.73 (13.4) & 1.56 (11.2) & 2.03 (13.8) & 1.57 (12.1) & 1.95 (15.5) & 1.69 (12.1) \\
                DO & \textbf{0.54 (6.1)} & 0.62 (7.2) & 0.62 (6.9) & 0.54 (6.2) & 0.71 (8.2) & 0.73 (8.6) & 2.11 (26.5) & 0.78 (9.1) & 0.77 (8.6) & 0.77 (8.7) & 0.64 (7.0) \\
                pH & 1.41 (13.6) & 1.23 (12.0) & 1.26 (12.4) & \textbf{1.01 (9.6)} & 2.64 (25.1) & 1.70 (16.6) & 1.42 (14.4) & 1.35 (13.1) & 1.90 (18.3) & 1.29 (12.7) & 1.68 (16.5) \\
                Temp. & 1.90 (9.0) & 1.90 (10.0) & 1.66 (8.5) & 2.00 (10.7) & 1.95 (9.8) & 2.13 (10.6) & 2.23 (11.4) & 3.18 (14.2) & 2.10 (9.7) & 3.67 (18.4) & \textbf{1.64 (7.0)} \\
                Ozone & 0.72 (33.9) & 0.78 (36.2) & 0.69 (33.9) & 0.79 (36.0) & 1.03 (44.7) & 1.50 (61.8) & \textbf{0.58 (28.6)} & 0.69 (33.5) & 0.66 (32.5) & 0.76 (30.6) & 0.89 (36.5) \\
                \midrule
                Borda Count & 74 & 76 & \textbf{90} & \textbf{90} & 43 & 57 & 65 & 42 & 51 & 31 & 41 \\
                \bottomrule
            \end{tabular}
        \end{scriptsize}
    \end{center}
\end{table*}

The average MASE and SMAPE over all forecasts on each dataset's test set is provided in Table \ref{table:mapeResults}. ForecastNet produces the best results on 8 of the 10 datasets. The cFN2 variation of ForecastNet achieves the best results on 4 of these 8 datasets. This result is reinforced with Borda counts provided in the last row. A Borda count ranks a set of $M$ models with integers $(1, \dots , M)$ such that the model with the lowest MASE is assigned a value of $M$ (a higher vote) and the model with the highest MASE is assigned a value of 1 (a lower vote). Borda counts thus provide a more \textit{aggregated} evaluation. FN2 and cFN2 are voted as the best models with the highest Borda counts. These are followed by cFN, FN and the attention models respectively.

The attention model produced the lowest error for the ozone dataset. The attention model is a relatively complex model and its attention mechanism assists in modelling long-term dependencies. FN2 provides strong competition to the attention model over the remaining datasets. This is despite it having an arguably a simpler architecture with no gating structures to reduce vanishing gradients. We argue that a key reason why ForecastNet performs so well is that it is not a time-invariant model.

Increasing the model complexity generally resulted in improved forecast performance in this study. For example, cFN generally outperforms FN. However, simpler models do not fail on the datasets. For example, the MLP provided comparably accurate forecasts despite its simplicity.

As expected, the DLM and SARIMA  statistical models performed well despite being linear models \citep{Makridakis2018M4}. For example, the SARIMA model achieved the lowest error on the pond temperature dataset. This suggests that the dynamics of this dataset are more linear. However, with the non-linear trends, amplitudes, and cyclic shapes in the other datasets, the DLM and SARIMA models did not perform as well the non-linear neural network-based models. It has however been shown that such linear statistical models can outperform complex machine learning models when the sample size is small \citep{cerqueira2019machine}.

Of the deep neural network-based models, the TCN performed the worst on several datasets. \citet{bai2018empirical} suggest that the model is in a simplified form and improved results may be possible by using a more advanced architecture. Furthermore, the TCN is designed to perform dilated convolutions over many samples. Of the datasets used in this study, the maximum number of input samples was 48 for datasets with a period of 24 hours. This may be too few to demonstrate the effectiveness of the TCN.

Several MASE results are above unity. In multi-step-ahead forecasting, this does not necessarily imply that a forecasting model produces results worse than the Naive model. In the MASE calculation, Naive is produces one-step-ahead forecasts with \textit{in-sample} data provided for each previous step. The forecasting model is computed \textit{out-of-sample}. That is, Naive has access to the ground-truth data in the MASE calculation, whereas the forecasting model does not.

\subsection{Model Comparison Box-Whisker Plots}
%
\begin{figure}[t]{}
	\centering
	\includegraphics{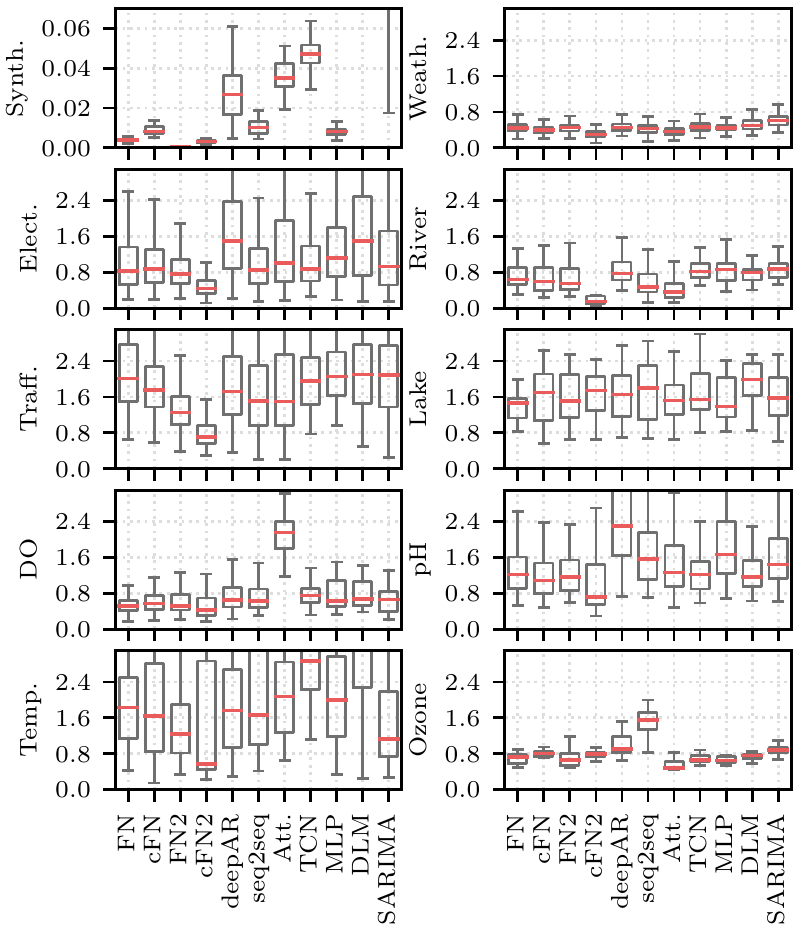}
	\caption{Box plot of the MASE over the set of forecasts produced for each training dataset. The DLM and SARIMA  boxes are outside of the plot range for the synthetic dataset.}
	\label{fig:boxplot}
\end{figure}
Box-whisker plots of the results over all the forecasts in each dataset's test set are provided in \figurename{~\ref{fig:boxplot}}. ForecastNet consistently produced small boxes with low median values. The small boxes indicate that there is little variation in the accuracy over the set of forecasts. This indicates some form of robustness in the ForecastNet model. The low median values indicate a high level of accuracy.

There was significant variation over the different models in the synthetic dataset box-whisker plots. For this dataset, the densely connected networks such as FN, FN2 and MLP have small boxes. DeepAR had a large box which indicates higher variation in the forecasts. This indicates that DeepAR is less robust for this dataset.

The models generally perform well on the weather dataset. This may be due to a more consistent seasonal amplitude in this dataset compared with the other real-world datasets. The lake and pH datasets have varying trends, amplitudes, and seasonal shape resulting in higher errors than other datasets. ForecastNet and the attention model seem to model these variations better given their lower errors.

In datasets such as electricity, traffic, and pH, ForecastNet produced low errors with small boxes indicating reliable and accurate forecasts. Especially in the electricity and traffic datasets, it is evident that increased model complexity and removing the mixture density output results in lower errors and a more robust model. The mixture density outputs can reduce accuracy because the learning algorithm seeks to \textit{simultaneously} minimise two variables in the normal distribution's log likelihood function. This is opposed to minimising a single variable in mean squared error loss function used for a linear output layer.


\section{Summary and Conclusion}

In this study, ForecastNet is proposed as novel deep neural architecture for multi-step-ahead time series forecasting. Its architecture breaks from convention of structuring a model around the RNN or CNN. The result is a model that is time-variant compared with the RNN and CNN, which are time-invariant.

We provide comparison over seven state-of-the-art deep learning and statistical models for forecasting seasonal time series data. The comparison is performed on 10 seasonal time-series datasets selected from various domains. We demonstrate that ForecastNet is both accurate and robust on all datasets. It outperforms other models in terms of MASE and SMAPE on 8 of the 10 datasets and is ranked as the best performing model overall with Borda counts.

In future work, shortcut-connections within and between hidden cells could be investigated. Furthermore, more work into integrating self-attention into the model will provide benefits relating to model interpretability. Lastly, by avoiding parameter sharing to achieve a time-variant model, ForecastNet can require more memory. An investigation into using memory reduction techniques (such as quantization) could be explored.

    \bibliography{bibliography}

\begin{thebibliography}{56}
\providecommand{\natexlab}[1]{#1}
\providecommand{\url}[1]{\texttt{#1}}
\expandafter\ifx\csname urlstyle\endcsname\relax
  \providecommand{\doi}[1]{doi: #1}\else
  \providecommand{\doi}{doi: \begingroup \urlstyle{rm}\Url}\fi

\bibitem[AEMO(2019)]{aemo2019}
AEMO.
\newblock Electricity demand data.
\newblock
  https://www.aemo.com.au/Electricity/National-Electricity-Market-NEM/Data-dashboard,
  January 2019.
\newblock Last accessed: January 2019.

\bibitem[Bahdanau et~al.(2015)Bahdanau, Cho, and Bengio]{bahdanau2014neural}
Bahdanau, D., Cho, K., and Bengio, Y.
\newblock Neural machine translation by jointly learning to align and
  translate.
\newblock In \emph{Proc. International Conference on Learning Representations},
  2015.
\newblock URL \url{http://arxiv.org/abs/1409.0473}.

\bibitem[Bai et~al.(2018)Bai, Kolter, and Koltun]{bai2018empirical}
Bai, S., Kolter, J.~Z., and Koltun, V.
\newblock An empirical evaluation of generic convolutional and recurrent
  networks for sequence modeling.
\newblock \emph{arXiv preprint arXiv:1803.01271}, 2018.

\bibitem[Balduzzi et~al.(2017)Balduzzi, Frean, Leary, Lewis, Ma, and
  McWilliams]{balduzzi2017shattered}
Balduzzi, D., Frean, M., Leary, L., Lewis, J.~P., Ma, K. W.-D., and McWilliams,
  B.
\newblock The shattered gradients problem: If resnets are the answer, then what
  is the question?
\newblock In Precup, D. and Teh, Y.~W. (eds.), \emph{Proceedings of the 34th
  International Conference on Machine Learning}, volume~70 of \emph{Proceedings
  of Machine Learning Research}, pp.\  342--350, International Convention
  Centre, Sydney, Australia, 06--11 Aug 2017. PMLR.
\newblock URL \url{http://proceedings.mlr.press/v70/balduzzi17b.html}.

\bibitem[Bayer \& Osendorfer(2014)Bayer and Osendorfer]{bayer2014learning}
Bayer, J. and Osendorfer, C.
\newblock Learning stochastic recurrent networks.
\newblock \emph{arXiv preprint arXiv:1411.7610}, 2014.

\bibitem[Binkowski et~al.(2018)Binkowski, Marti, and
  Donnat]{binkowski2018autoregressive}
Binkowski, M., Marti, G., and Donnat, P.
\newblock Autoregressive convolutional neural networks for asynchronous time
  series.
\newblock In \emph{International Conference on Machine Learning}, pp.\
  579--588, 2018.

\bibitem[Bishop(1994)]{bishop1994mixture}
Bishop, C.~M.
\newblock Mixture density networks.
\newblock Technical Report NCRG\/94\/004, Aston University, 1994.

\bibitem[Caterini \& Chang(2018)Caterini and Chang]{Caterini2018Deep}
Caterini, A.~L. and Chang, D.~E.
\newblock \emph{Deep Neural Networks in a Mathematical Framework}.
\newblock Springer International Publishing, Cham, 2018.
\newblock ISBN 978-3-319-75304-1.
\newblock \doi{10.1007/978-3-319-75304-1}.
\newblock URL \url{https://doi.org/10.1007/978-3-319-75304-1}.

\bibitem[Cerqueira et~al.(2019)Cerqueira, Torgo, and
  Soares]{cerqueira2019machine}
Cerqueira, V., Torgo, L., and Soares, C.
\newblock Machine learning vs statistical methods for time series forecasting:
  Size matters.
\newblock \emph{arXiv preprint arXiv:1909.13316}, 2019.

\bibitem[Chang et~al.(2017)Chang, Zhang, Han, Yu, Guo, Tan, Cui, Witbrock,
  Hasegawa-Johnson, and Huang]{Chang2017Dilated}
Chang, S., Zhang, Y., Han, W., Yu, M., Guo, X., Tan, W., Cui, X., Witbrock, M.,
  Hasegawa-Johnson, M.~A., and Huang, T.~S.
\newblock Dilated recurrent neural networks.
\newblock In Guyon, I., Luxburg, U.~V., Bengio, S., Wallach, H., Fergus, R.,
  Vishwanathan, S., and Garnett, R. (eds.), \emph{Advances in Neural
  Information Processing Systems 30}, pp.\  77--87. Curran Associates, Inc.,
  2017.
\newblock URL
  \url{http://papers.nips.cc/paper/6613-dilated-recurrent-neural-networks.pdf}.

\bibitem[Cho et~al.(2014{\natexlab{a}})Cho, Van~Merri{\"e}nboer, Bahdanau, and
  Bengio]{cho2014properties}
Cho, K., Van~Merri{\"e}nboer, B., Bahdanau, D., and Bengio, Y.
\newblock On the properties of neural machine translation: Encoder-decoder
  approaches.
\newblock \emph{arXiv preprint arXiv:1409.1259}, 2014{\natexlab{a}}.

\bibitem[Cho et~al.(2014{\natexlab{b}})Cho, Van~Merri{\"e}nboer, Gulcehre,
  Bahdanau, Bougares, Schwenk, and Bengio]{cho2014learning}
Cho, K., Van~Merri{\"e}nboer, B., Gulcehre, C., Bahdanau, D., Bougares, F.,
  Schwenk, H., and Bengio, Y.
\newblock Learning phrase representations using rnn encoder-decoder for
  statistical machine translation.
\newblock \emph{arXiv preprint arXiv:1406.1078}, 2014{\natexlab{b}}.

\bibitem[Chung et~al.(2015)Chung, Kastner, Dinh, Goel, Courville, and
  Bengio]{chung2015recurrent}
Chung, J., Kastner, K., Dinh, L., Goel, K., Courville, A.~C., and Bengio, Y.
\newblock A recurrent latent variable model for sequential data.
\newblock In Cortes, C., Lawrence, N.~D., Lee, D.~D., Sugiyama, M., and
  Garnett, R. (eds.), \emph{Advances in Neural Information Processing Systems
  28}, pp.\  2980--2988. Curran Associates, Inc., 2015.
\newblock URL
  \url{http://papers.nips.cc/paper/5653-a-recurrent-latent-variable-model-for-sequential-data.pdf}.

\bibitem[Dabrowski et~al.(2018)Dabrowski, Rahman, George, Arnold, and
  McCulloch]{dabrowski2018state}
Dabrowski, J.~J., Rahman, A., George, A., Arnold, S., and McCulloch, J.
\newblock State space models for forecasting water quality variables: An
  application in aquaculture prawn farming.
\newblock In \emph{Proceedings of the 24th ACM SIGKDD International Conference
  on Knowledge Discovery \&\#38; Data Mining}, KDD '18, pp.\  177--185, New
  York, NY, USA, 2018. ACM.
\newblock ISBN 978-1-4503-5552-0.
\newblock \doi{10.1145/3219819.3219841}.
\newblock URL \url{http://doi.acm.org/10.1145/3219819.3219841}.

\bibitem[Doerr et~al.(2018)Doerr, Daniel, Schiegg, Nguyen-Tuong, Schaal,
  Toussaint, and Trimpe]{doerr2018probabilistic}
Doerr, A., Daniel, C., Schiegg, M., Nguyen-Tuong, D., Schaal, S., Toussaint,
  M., and Trimpe, S.
\newblock Probabilistic recurrent state-space models.
\newblock \emph{arXiv preprint arXiv:1801.10395}, 2018.

\bibitem[Du et~al.(2020)Du, Li, Yang, and Horng]{du2020multivariate}
Du, S., Li, T., Yang, Y., and Horng, S.-J.
\newblock Multivariate time series forecasting via attention-based
  encoder–decoder framework.
\newblock \emph{Neurocomputing}, 2020.
\newblock ISSN 0925-2312.
\newblock \doi{https://doi.org/10.1016/j.neucom.2019.12.118}.
\newblock URL
  \url{http://www.sciencedirect.com/science/article/pii/S0925231220300606}.

\bibitem[Fraccaro et~al.(2016)Fraccaro, S\o~nderby, Paquet, and
  Winther]{fraccaro2016sequential}
Fraccaro, M., S\o~nderby, S. r.~K., Paquet, U., and Winther, O.
\newblock Sequential neural models with stochastic layers.
\newblock In Lee, D.~D., Sugiyama, M., Luxburg, U.~V., Guyon, I., and Garnett,
  R. (eds.), \emph{Advances in Neural Information Processing Systems 29}, pp.\
  2199--2207. Curran Associates, Inc., 2016.
\newblock URL
  \url{http://papers.nips.cc/paper/6039-sequential-neural-models-with-stochastic-layers.pdf}.

\bibitem[Gers et~al.(1999)Gers, Schmidhuber, and Cummins]{gers1999learning}
Gers, F., Schmidhuber, J., and Cummins, F.
\newblock Learning to forget: continual prediction with lstm.
\newblock \emph{IET Conference Proceedings}, pp.\  850--855(5), January 1999.
\newblock URL
  \url{http://digital-library.theiet.org/content/conferences/10.1049/cp_19991218}.

\bibitem[Goodfellow et~al.(2016)Goodfellow, Bengio, and
  Courville]{goodfellow2016deep}
Goodfellow, I., Bengio, Y., and Courville, A.
\newblock \emph{Deep learning}.
\newblock MIT press, 2016.

\bibitem[Graves(2013)]{graves2013generating}
Graves, A.
\newblock Generating sequences with recurrent neural networks.
\newblock \emph{arXiv preprint arXiv:1308.0850}, 2013.

\bibitem[He et~al.(2015{\natexlab{a}})He, Zhang, Ren, and Sun]{He2015Deep}
He, K., Zhang, X., Ren, S., and Sun, J.
\newblock Deep residual learning for image recognition.
\newblock \emph{arXiv preprint arXiv:1512.03385}, 2015{\natexlab{a}}.

\bibitem[He et~al.(2015{\natexlab{b}})He, Zhang, Ren, and Sun]{he2015delving}
He, K., Zhang, X., Ren, S., and Sun, J.
\newblock Delving deep into rectifiers: Surpassing human-level performance on
  imagenet classification.
\newblock In \emph{2015 IEEE International Conference on Computer Vision
  (ICCV)}, pp.\  1026--1034, Dec 2015{\natexlab{b}}.
\newblock \doi{10.1109/ICCV.2015.123}.

\bibitem[Hewamalage et~al.(2019)Hewamalage, Bergmeir, and
  Bandara]{hewamalage2019recurrent}
Hewamalage, H., Bergmeir, C., and Bandara, K.
\newblock Recurrent neural networks for time series forecasting: Current status
  and future directions.
\newblock \emph{arXiv preprint arXiv:1909.00590}, 2019.

\bibitem[Hochreiter \& Schmidhuber(1997)Hochreiter and
  Schmidhuber]{hochreiter1997long}
Hochreiter, S. and Schmidhuber, J.
\newblock Long short-term memory.
\newblock \emph{Neural Computation}, 9\penalty0 (8):\penalty0 1735--1780, 1997.
\newblock \doi{10.1162/neco.1997.9.8.1735}.
\newblock URL \url{https://doi.org/10.1162/neco.1997.9.8.1735}.

\bibitem[Hyndman(2014)]{datamarket2018}
Hyndman, R.
\newblock Time series data library.
\newblock https://robjhyndman.com/tsdl/, February 2014.
\newblock Last accessed: July 2018.

\bibitem[Hyndman \& Athanasopoulos(2018)Hyndman and
  Athanasopoulos]{hyndman2018forecasting}
Hyndman, R. and Athanasopoulos, G.
\newblock \emph{Forecasting: principles and practice}.
\newblock OTexts, Melbourne, Australia, 2nd edition, 2018.
\newblock URL \url{https://otexts.com/fpp2}.
\newblock Accessed on 27 April, 2019.

\bibitem[Hyndman \& Koehler(2006)Hyndman and Koehler]{Hyndman2006Another}
Hyndman, R.~J. and Koehler, A.~B.
\newblock Another look at measures of forecast accuracy.
\newblock \emph{International Journal of Forecasting}, 22\penalty0
  (4):\penalty0 679 -- 688, 2006.
\newblock ISSN 0169-2070.
\newblock \doi{https://doi.org/10.1016/j.ijforecast.2006.03.001}.
\newblock URL
  \url{http://www.sciencedirect.com/science/article/pii/S0169207006000239}.

\bibitem[Kingma \& Ba(2014)Kingma and Ba]{kingma2014adam}
Kingma, D.~P. and Ba, J.
\newblock Adam: A method for stochastic optimization.
\newblock \emph{arXiv preprint arXiv:1412.6980}, 2014.

\bibitem[Krishnan et~al.(2015)Krishnan, Shalit, and Sontag]{krishnan2015deep}
Krishnan, R.~G., Shalit, U., and Sontag, D.
\newblock Deep kalman filters.
\newblock \emph{arXiv preprint arXiv:1511.05121}, 2015.

\bibitem[Kuznetsov \& Mariet(2018)Kuznetsov and
  Mariet]{kuznetsov2018foundations}
Kuznetsov, V. and Mariet, Z.
\newblock Foundations of sequence-to-sequence modeling for time series.
\newblock \emph{arXiv preprint arXiv:1805.03714}, 2018.

\bibitem[Lakshminarayanan et~al.(2017)Lakshminarayanan, Pritzel, and
  Blundell]{lakshminarayanan2017Simple}
Lakshminarayanan, B., Pritzel, A., and Blundell, C.
\newblock Simple and scalable predictive uncertainty estimation using deep
  ensembles.
\newblock In Guyon, I., Luxburg, U.~V., Bengio, S., Wallach, H., Fergus, R.,
  Vishwanathan, S., and Garnett, R. (eds.), \emph{Advances in Neural
  Information Processing Systems 30}, pp.\  6402--6413. Curran Associates,
  Inc., 2017.
\newblock URL
  \url{http://papers.nips.cc/paper/7219-simple-and-scalable-predictive-uncertainty-estimation-using-deep-ensembles.pdf}.

\bibitem[Laptev et~al.(2017)Laptev, Yosinski, Li, and Smyl]{laptev2017time}
Laptev, N., Yosinski, J., Li, L.~E., and Smyl, S.
\newblock Time-series extreme event forecasting with neural networks at uber.
\newblock In \emph{International Conference on Machine Learning}, number~34,
  pp.\  1--5, 2017.

\bibitem[LeCun et~al.(1995)LeCun, Bengio, et~al.]{lecun1995convolutional}
LeCun, Y., Bengio, Y., et~al.
\newblock Convolutional networks for images, speech, and time series.
\newblock \emph{The handbook of brain theory and neural networks},
  3361\penalty0 (10):\penalty0 1995, 1995.

\bibitem[Li et~al.(2019)Li, Jin, Xuan, Zhou, Chen, Wang, and
  Yan]{li2019enhancing}
Li, S., Jin, X., Xuan, Y., Zhou, X., Chen, W., Wang, Y.-X., and Yan, X.
\newblock Enhancing the locality and breaking the memory bottleneck of
  transformer on time series forecasting.
\newblock In \emph{Advances in Neural Information Processing Systems}, pp.\
  5244--5254, 2019.

\bibitem[Maddix et~al.(2018)Maddix, Wang, and Smola]{maddix2018deep}
Maddix, D.~C., Wang, Y., and Smola, A.
\newblock Deep factors with gaussian processes for forecasting.
\newblock \emph{arXiv preprint arXiv:1812.00098}, 2018.

\bibitem[Makridakis et~al.(2018{\natexlab{a}})Makridakis, Spiliotis, and
  Assimakopoulos]{Makridakis2018M4}
Makridakis, S., Spiliotis, E., and Assimakopoulos, V.
\newblock The m4 competition: Results, findings, conclusion and way forward.
\newblock \emph{International Journal of Forecasting}, 34\penalty0
  (4):\penalty0 802 -- 808, 2018{\natexlab{a}}.
\newblock ISSN 0169-2070.
\newblock \doi{https://doi.org/10.1016/j.ijforecast.2018.06.001}.
\newblock URL
  \url{http://www.sciencedirect.com/science/article/pii/S0169207018300785}.

\bibitem[Makridakis et~al.(2018{\natexlab{b}})Makridakis, Spiliotis, and
  Assimakopoulos]{Makridakis2018Statistical}
Makridakis, S., Spiliotis, E., and Assimakopoulos, V.
\newblock Statistical and machine learning forecasting methods: Concerns and
  ways forward.
\newblock \emph{PLOS ONE}, 13\penalty0 (3):\penalty0 1--26, 03
  2018{\natexlab{b}}.
\newblock \doi{10.1371/journal.pone.0194889}.
\newblock URL \url{https://doi.org/10.1371/journal.pone.0194889}.

\bibitem[Mehrkanoon(2019)]{Mehrkanoon2019Deep}
Mehrkanoon, S.
\newblock Deep shared representation learning for weather elements forecasting.
\newblock \emph{Knowledge-Based Systems}, 179:\penalty0 120 -- 128, 2019.
\newblock ISSN 0950-7051.
\newblock \doi{https://doi.org/10.1016/j.knosys.2019.05.009}.
\newblock URL
  \url{http://www.sciencedirect.com/science/article/pii/S0950705119302151}.

\bibitem[{Nguyen} et~al.(2019){Nguyen}, {Yang}, {Li}, {Pan}, {Cao}, and
  {Jin}]{Nguyen2019Forecasting}
{Nguyen}, L., {Yang}, Z., {Li}, J., {Pan}, Z., {Cao}, G., and {Jin}, F.
\newblock Forecasting people's needs in hurricane events from social network.
\newblock \emph{IEEE Transactions on Big Data}, pp.\  1--1, 2019.
\newblock \doi{10.1109/TBDATA.2019.2941887}.

\bibitem[Oord et~al.(2016)Oord, Dieleman, Zen, Simonyan, Vinyals, Graves,
  Kalchbrenner, Senior, and Kavukcuoglu]{oord2016wavenet}
Oord, A. v.~d., Dieleman, S., Zen, H., Simonyan, K., Vinyals, O., Graves, A.,
  Kalchbrenner, N., Senior, A., and Kavukcuoglu, K.
\newblock Wavenet: A generative model for raw audio.
\newblock \emph{arXiv preprint arXiv:1609.03499}, 2016.

\bibitem[Oppenheim \& Schafer(2009)Oppenheim and
  Schafer]{oppenheim2009discrete}
Oppenheim, A.~V. and Schafer, R.~W.
\newblock \emph{Discrete-Time Signal Processing}.
\newblock Pearson education signal processing series. Pearson, 3rd edition,
  2009.
\newblock ISBN 9780131988422.

\bibitem[Rangapuram et~al.(2018)Rangapuram, Seeger, Gasthaus, Stella, Wang, and
  Januschowski]{rangapuram2018deep}
Rangapuram, S.~S., Seeger, M.~W., Gasthaus, J., Stella, L., Wang, Y., and
  Januschowski, T.
\newblock Deep state space models for time series forecasting.
\newblock In Bengio, S., Wallach, H., Larochelle, H., Grauman, K.,
  Cesa-Bianchi, N., and Garnett, R. (eds.), \emph{Advances in Neural
  Information Processing Systems 31}, pp.\  7785--7794. Curran Associates,
  Inc., 2018.
\newblock URL
  \url{http://papers.nips.cc/paper/8004-deep-state-space-models-for-time-series-forecasting.pdf}.

\bibitem[Salinas et~al.(2019)Salinas, Flunkert, Gasthaus, and
  Januschowski]{flunkert2017deepar}
Salinas, D., Flunkert, V., Gasthaus, J., and Januschowski, T.
\newblock Deepar: Probabilistic forecasting with autoregressive recurrent
  networks.
\newblock \emph{International Journal of Forecasting}, 2019.
\newblock ISSN 0169-2070.
\newblock \doi{https://doi.org/10.1016/j.ijforecast.2019.07.001}.
\newblock URL
  \url{http://www.sciencedirect.com/science/article/pii/S0169207019301888}.

\bibitem[Schmidhuber(2015)]{Schmidhuber2015Deep}
Schmidhuber, J.
\newblock Deep learning in neural networks: An overview.
\newblock \emph{Neural Networks}, 61:\penalty0 85 -- 117, 2015.
\newblock ISSN 0893-6080.
\newblock \doi{https://doi.org/10.1016/j.neunet.2014.09.003}.
\newblock URL
  \url{http://www.sciencedirect.com/science/article/pii/S0893608014002135}.

\bibitem[Sen et~al.(2019)Sen, Yu, and Dhillon]{sen2019think}
Sen, R., Yu, H.-F., and Dhillon, I.
\newblock Think globally, act locally: A deep neural network approach to
  high-dimensional time series forecasting.
\newblock \emph{arXiv preprint arXiv:1905.03806}, 2019.

\bibitem[Srivastava et~al.(2015)Srivastava, Greff, and
  Schmidhuber]{Srivastava2015Training}
Srivastava, R.~K., Greff, K., and Schmidhuber, J.
\newblock Training very deep networks.
\newblock In Cortes, C., Lawrence, N.~D., Lee, D.~D., Sugiyama, M., and
  Garnett, R. (eds.), \emph{Advances in Neural Information Processing Systems
  28}, pp.\  2377--2385. Curran Associates, Inc., 2015.
\newblock URL
  \url{http://papers.nips.cc/paper/5850-training-very-deep-networks.pdf}.

\bibitem[Sutskever et~al.(2014)Sutskever, Vinyals, and
  Le]{Sutskever2014Sequence}
Sutskever, I., Vinyals, O., and Le, Q.~V.
\newblock Sequence to sequence learning with neural networks.
\newblock In Ghahramani, Z., Welling, M., Cortes, C., Lawrence, N.~D., and
  Weinberger, K.~Q. (eds.), \emph{Advances in Neural Information Processing
  Systems 27}, pp.\  3104--3112. Curran Associates, Inc., 2014.
\newblock URL
  \url{http://papers.nips.cc/paper/5346-sequence-to-sequence-learning-with-neural-networks.pdf}.

\bibitem[UCI(2019)]{uci2019}
UCI.
\newblock Uc irvine machine learning repository.
\newblock https://archive.ics.uci.edu/ml/index.php, 2019.
\newblock Last accessed: November 2019.

\bibitem[Vaswani et~al.(2017)Vaswani, Shazeer, Parmar, Uszkoreit, Jones, Gomez,
  Kaiser, and Polosukhin]{vaswani2017attention}
Vaswani, A., Shazeer, N., Parmar, N., Uszkoreit, J., Jones, L., Gomez, A.~N.,
  Kaiser, L.~u., and Polosukhin, I.
\newblock Attention is all you need.
\newblock In Guyon, I., Luxburg, U.~V., Bengio, S., Wallach, H., Fergus, R.,
  Vishwanathan, S., and Garnett, R. (eds.), \emph{Advances in Neural
  Information Processing Systems 30}, pp.\  5998--6008. Curran Associates,
  Inc., 2017.
\newblock URL
  \url{http://papers.nips.cc/paper/7181-attention-is-all-you-need.pdf}.

\bibitem[Wang(2016)]{pydlm}
Wang, X.
\newblock Pydlm, 2016.
\newblock URL \url{https://pydlm.github.io/index.html}.

\bibitem[Wen et~al.(2017)Wen, Torkkola, Narayanaswamy, and
  Madeka]{wen2017multi}
Wen, R., Torkkola, K., Narayanaswamy, B., and Madeka, D.
\newblock A multi-horizon quantile recurrent forecaster.
\newblock \emph{arXiv preprint arXiv:1711.11053}, 2017.

\bibitem[West \& Harrison(1997)West and Harrison]{west2013bayesian}
West, M. and Harrison, J.
\newblock \emph{Bayesian Forecasting and Dynamic Models}.
\newblock Springer Series in Statistics. Springer New York, 1997.
\newblock ISBN 978-0-387-22777-1.
\newblock \doi{10.1007/0-387-22777-6}.
\newblock URL \url{https://doi.org/10.1007/0-387-22777-6}.

\bibitem[Wu et~al.(2020)Wu, Green, Ben, and O'Banion]{wu2020deep}
Wu, N., Green, B., Ben, X., and O'Banion, S.
\newblock Deep transformer models for time series forecasting: The influenza
  prevalence case.
\newblock \emph{arXiv preprint arXiv:2001.08317}, 2020.

\bibitem[Xing et~al.(2019)Xing, Cambria, and Zhang]{Xing2019Sentiment}
Xing, F.~Z., Cambria, E., and Zhang, Y.
\newblock Sentiment-aware volatility forecasting.
\newblock \emph{Knowledge-Based Systems}, 176:\penalty0 68 -- 76, 2019.
\newblock ISSN 0950-7051.
\newblock \doi{https://doi.org/10.1016/j.knosys.2019.03.029}.
\newblock URL
  \url{http://www.sciencedirect.com/science/article/pii/S0950705119301546}.

\bibitem[{Zhang} et~al.(2019){Zhang}, {Thorburn}, {Xiang}, and
  {Fitch}]{Zhang2019SSIM}
{Zhang}, Y., {Thorburn}, P., {Xiang}, W., and {Fitch}, P.
\newblock {SSIM} -a deep learning approach for recovering missing time series
  sensor data.
\newblock \emph{IEEE Internet of Things Journal}, 2019.
\newblock ISSN 2327-4662.
\newblock \doi{10.1109/JIOT.2019.2909038}.

\bibitem[{Zhu} \& {Laptev}(2017){Zhu} and {Laptev}]{Zhu2017Deep}
{Zhu}, L. and {Laptev}, N.
\newblock Deep and confident prediction for time series at uber.
\newblock In \emph{2017 IEEE International Conference on Data Mining Workshops
  (ICDMW)}, pp.\  103--110, Nov 2017.
\newblock \doi{10.1109/ICDMW.2017.19}.

\end{thebibliography}
    \bibliographystyle{icml2020}
    
    \ifgenAppendices
        \newpage
        \appendix

\section{Additional Results}
\label{sec:additionalResults}

\subsection{Forecast Plots}

\begin{figure}[!b]{}
    \centering
    \includegraphics{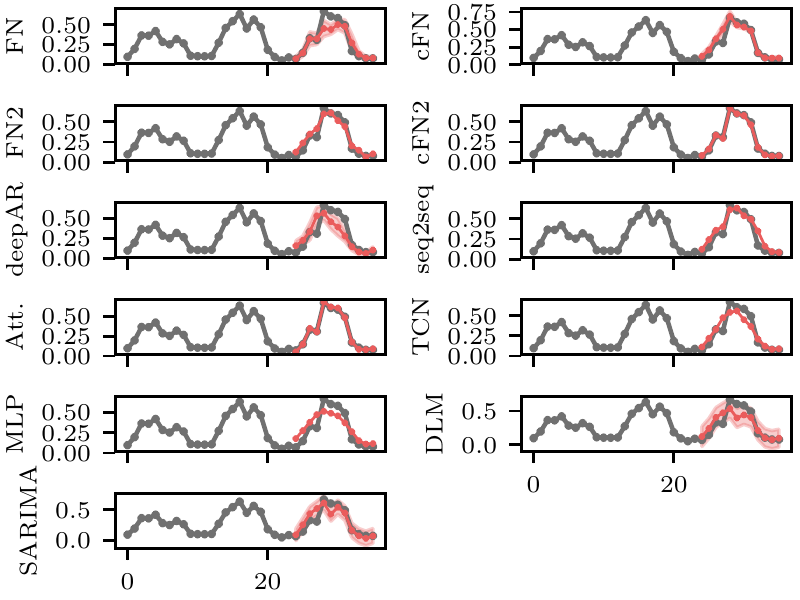}
    \caption{Plots of the forecasts for each model on the river flow dataset. The grey curve plots the sensor data and the red curve plots the forecasts. The standard deviation for the relevant models is plotted as the red shaded region.}
    \label{fig:forecast_riverflow}
\end{figure}

To illustrate forecasting ability, a plot of the forecasts for each model on the river flow dataset is presented in \figurename{~\ref{fig:forecast_riverflow}}. The uncertainty (standard deviation) is plotted for the FN, cFN, deepAR, DLM, and SARIMA models. 

As illustrated in \figurename{~\ref{fig:forecast_riverflow}}, the river flow data has a varying amplitude and an irregular seasonal curve. The first peak has a lower amplitude than the second and third peak. Furthermore, the shape of each the seasonal cycle is unique. An accurate model is one that is able to adapt to these changes by learning the underlying dynamics which generate these changes.

Despite the variations over seasonal cycles, cFN2 provided highly accurate forecasts, where fine intricacies in the data were forecast. In this example, the attention model also produces an accurate forecast, however cFN2 outperforms the attention model on this dataset. The DeepAR and the sequence-to-sequence models provided more smooth forecasts. DeepAR has the advantage of providing uncertainty with the forecast.

The DLM and SARIMA models forecast a curve that is similar in form to the curve from the previous cycle. This is expected as both of these models consider one cycle of data in the past to compute the forecast. Furthermore, these statistical models assume a linear trend, whereas the actual trend is nonlinear and appears more stochastic in nature. The nonlinear models are more capable of modelling the nonlinear dynamics.

\subsection{Computational Complexity}
\label{sec:computationalComplexity}

%
\begin{table*}[!t]{}
    \caption{Average epoch duration over 10 epochs. The times are represented as a multiple of the epoch duration of the MLP for each respective dataset.}
    \label{table:epochTime}
    \setlength\tabcolsep{4pt}
    \begin{center}
        \begin{scriptsize}
                \begin{tabular}{l c c c c c c c c c}
                    \toprule
                    & FN & cFN & FN2 & cFN2 & deepAR & Seq2Seq & Attention & TCN & MLP \\
                    \midrule
                    Synth.  & 12.62 & 36.84 & 5.21 & 32.48 & 39.38 & 36.68 & 69.60 & 5.04 & 1.00 \\
                    Weath.  & 6.55 & 18.88 & 3.38 & 18.22 & 23.39 & 22.14 & 36.93 & 2.28 & 1.00 \\
                    Elect.  & 15.07 & 77.90 & 6.09 & 74.79 & 48.26 & 45.60 & 101.55 & 13.96 & 1.00 \\
                    River  & 7.05 & 18.63 & 3.67 & 17.52 & 22.61 & 21.18 & 35.83 & 2.85 & 1.00 \\
                    Traff.  & 14.06 & 60.68 & 5.56 & 56.94 & 44.38 & 41.54 & 86.20 & 15.86 & 1.00 \\
                    Lake    & 7.52 & 18.41 & 3.68 & 16.42 & 23.65 & 22.52 & 37.64 & 2.62 & 1.00 \\
                    DO      & 17.41 & 46.64 & 6.61 & 39.28 & 47.29 & 44.70 & 88.71 & 8.96 & 1.00 \\
                    pH      & 17.44 & 45.24 & 6.69 & 42.48 & 46.21 & 44.79 & 89.40 & 9.08 & 1.00 \\
                    Temp. & 16.97 & 48.62 & 6.57 & 41.41 & 47.27 & 44.39 & 88.32 & 8.94 & 1.00 \\
                    Ozone   & 7.89 & 17.45 & 3.91 & 15.68 & 23.50 & 21.93 & 36.53 & 2.87 & 1.00 \\
                    \midrule
                    Median & 13.34 & 41.04 & 5.38 & 35.88 & 41.88 & 39.11 & 77.90 & 6.99 & 1.00 \\
                    \bottomrule
                \end{tabular}
        \end{scriptsize}
    \end{center}
\end{table*}

To demonstrate the computational complexity of the models, their training times were considered. Each model was trained on each dataset for 10 epochs and these epoch times were logged. The results are presented in Table \ref{table:epochTime}.
To provide some independence from the platform on which the models were trained, the results are presented as an average epoch time of the particular model divided by the average epoch time of the MLP. The results are thus represented as a multiple of the epoch duration of the MLP.

The MLP had the simplest architecture and thus provided the shortest epoch times. Second to the MLP are the FN2 and the TCN. These models required 5 to 7 times the amount of time to train an epoch compared to the MLP. Note that FN2 is ranked as a top model and also has the second lowest median computation time, resulting in a highly attractive model. 

Using a density mixture output on ForecastNet significantly increased the epoch time. This is evident when the duration of FN is compared to FN2 and the duration of cFN is compared to cFN2. However, even with increased computational complexity, the median duration of cFN and cFN2 was less than the sequence-to-sequence, attention, and deepAR models. The attention model in particular had a high epoch duration due to its complex architecture. The results provide empirical evidence that ForecastNet provided reduced computational complexity compared with the benchmark models.

\subsection{Vanishing Gradients}

To demonstrate that ForecastNet is able to mitigate vanishing gradients, it was compared with a deep MLP on the synthetic dataset. The deep MLP was selected for this purpose as it has no guard against vanishing gradients. The MLP was configured with 40 inputs, 20 hidden layers, and 20 outputs. Similarly, ForecastNet was configured with 20 linear outputs and a single hidden layer in each cell. This results in a total of 20 hidden layers, 20 outputs, and 40 inputs as for the deep MLP. Thus, the primary difference between the models was that ForecastNet uses interleaved outputs, whereas the deep MLP's outputs are all located at the output layer. Both models used the sigmoid activation function with Xavier normal initialisation in hidden layers. The models were trained over 10 epochs with a learning rate of $10^{-4}$.

The absolute mean value of the weights for the first and last hidden layers are plotted in the top graph of \figurename{~\ref{fig:vanishing_gradient}}. The training losses are plotted in the bottom graph of \figurename{~\ref{fig:vanishing_gradient}}. The gradient in the first layer of the MLP remained close to zero over all 10 epochs. This indicates a vanishing gradient problem. Furthermore, as indicated in the loss plot, the model did not converge to an optimal solution. In comparison, the gradients of both layers in ForecastNet were non-zero. This indicates that ForecastNet had mitigated vanishing gradients. Furthermore, ForecastNet converged to a more optimal solution compared to the deep MLP model.
%
\begin{figure}[!t]
	\centering
	\includegraphics{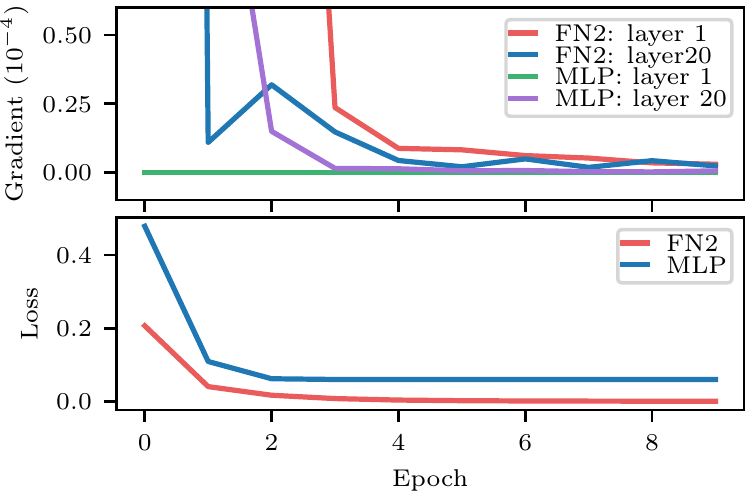}
	\caption{Gradient and loss plots for ForecastNet and a deep MLP. Layer 1 of the MLP experiences vanishing gradients. The gradient plot is cut-off at $0.6\times 10^{-4}$}
	\label{fig:vanishing_gradient}
\end{figure}

\section{Additional Model Details}
A more detailed diagram of FN and cFN are provided in \figurename{~\ref{fig:FN}} and \figurename{~\ref{fig:cFN}} respectively.
\begin{figure}[!b]{}
    \centering

\def\horizontalsep{1.0cm}
\def\verticalsep{0.3cm}

\begin{scriptsize}
	\begin{tikzpicture}[->, >={Latex[length=1.5mm, width=1mm, black!40]}, draw=black!40]
	\tikzstyle{output}=[diamond, fill=black!35, draw=black!60, minimum size=0.4cm,inner sep=0pt]
	\tikzstyle{input}=[circle, fill=black!20, draw=black!60, minimum size=0.3cm,inner sep=0pt]
	\tikzstyle{neuron}=[fill=black!0, draw=black, rotate=90, minimum width = 1.2cm, minimum height=18pt,inner sep=1pt, align=center]
	\tikzstyle{dots}=[inner sep=0pt]
	\pgfsetshortenstart{0.7pt}
	\pgfsetshortenend{0.7pt}
	
	\node[input, pin={[pin edge={<-}]above:$\mathbf{x}$}] (X1) at (0*\horizontalsep, 0*\verticalsep) {};
	
	\node[dots] (dots1) at (0*\horizontalsep, -1*\verticalsep) {$\dots$};
	
	\node[draw=black!15, minimum width=2.1*\horizontalsep, minimum height=1.4cm] (bdr) at 
	(1.5*\horizontalsep, -1*\verticalsep) {};
	\node[neuron] (H1a) at (1*\horizontalsep, -1*\verticalsep) {Dense \\ \textcolor{gray}{(h=24)}};
	\node[neuron] (H1b) at (2*\horizontalsep, -1*\verticalsep) {Dense \\ \textcolor{gray}{(h=24)}};
	
	\node[output, pin={[pin edge={->}]below:$\hat{x}_{t+i}$}] (Y1) at (3*\horizontalsep, -2*\verticalsep) {};

	\node[dots] (dots2) at (3*\horizontalsep, -1*\verticalsep) {$\dots$};
	
	\path (dots1) edge (H1a);
	\path (X1) edge (H1a.north);
	\path (H1a) edge (H1b);
	\path (H1b.south) edge (Y1.west);
	\path (H1b) edge (dots2);
	%

%
	
	\end{tikzpicture}
\end{scriptsize}
    \caption{ForecastNet with a densely connected hidden cell structure as used in FN and FN2. The number of hidden neurons in each layer is denoted by h. }
    \label{fig:FN}
\end{figure}
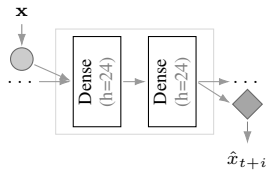
%
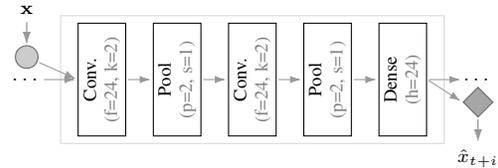
\begin{figure}[!b]{}
    \centering

\def\horizontalsep{1.0cm}
\def\verticalsep{0.3cm}

\begin{scriptsize}
	\begin{tikzpicture}[->, >={Latex[length=1.5mm, width=1mm, black!40]}, draw=black!40]
	\tikzstyle{output}=[diamond, fill=black!35, draw=black!60, minimum size=0.4cm,inner sep=0pt]
	\tikzstyle{input}=[circle, fill=black!20, draw=black!60, minimum size=0.3cm,inner sep=0pt]
	\tikzstyle{neuron}=[fill=black!0, draw=black, rotate=90, minimum width = 1.5cm, minimum height=18pt,inner sep=1pt, align=center]
	\tikzstyle{dots}=[inner sep=0pt]
	\pgfsetshortenstart{0.7pt}
	\pgfsetshortenend{0.7pt}
	
	\node[input, pin={[pin edge={<-}]above:$\mathbf{x}$}] (X1) at (0*\horizontalsep, 0*\verticalsep) {};
	
	\node[dots] (dots1) at (0*\horizontalsep, -1*\verticalsep) {$\dots$};
	
	\node[draw=black!15, minimum width=5.1*\horizontalsep, minimum height=1.7cm] (bdr) at 
	(3*\horizontalsep, -1*\verticalsep) {};
	\node[neuron] (H1a) at (1*\horizontalsep, -1*\verticalsep) {Conv. \\ \textcolor{gray}{(f=24, k=2)}};
	\node[neuron] (H1b) at (2*\horizontalsep, -1*\verticalsep) {Pool \\ \textcolor{gray}{(p=2, s=1)}};
	\node[neuron] (H1c) at (3*\horizontalsep, -1*\verticalsep) {Conv. \\ \textcolor{gray}{(f=24, k=2)}};
	\node[neuron] (H1d) at (4*\horizontalsep, -1*\verticalsep) {Pool \\ \textcolor{gray}{(p=2, s=1)}};
	\node[neuron] (H1e) at (5*\horizontalsep, -1*\verticalsep) {Dense \\ \textcolor{gray}{(h=24)}};
	
	\node[output, pin={[pin edge={->}]below:$\hat{x}_{t+i}$}] (Y1) at (6*\horizontalsep, -2*\verticalsep) {};

	\node[dots] (dots2) at (6*\horizontalsep, -1*\verticalsep) {$\dots$};
	
	\path (dots1) edge (H1a);
	\path (X1) edge (H1a.north);
	\path (H1a) edge (H1b);
	\path (H1b) edge (H1c);
	\path (H1c) edge (H1d);
	\path (H1d) edge (H1e);
	\path (H1e.south) edge (Y1.west);
	\path (H1e) edge (dots2);
	%

%
	
	\end{tikzpicture}
\end{scriptsize}
    \caption{ForecastNet with a CNN hidden cell structure as used in cFN and cFN2. The number of filters, kernel size, padding, strides, and number of hidden neurons are denoted by f, k, p, s, and h respectively. }
    \label{fig:cFN}
\end{figure}
%

%

\section{Detailed Derivation of Interleaved Output Chain Rule in Equation (\ref{eq:chainRule2})}
\label{sec:interleavedOutput}
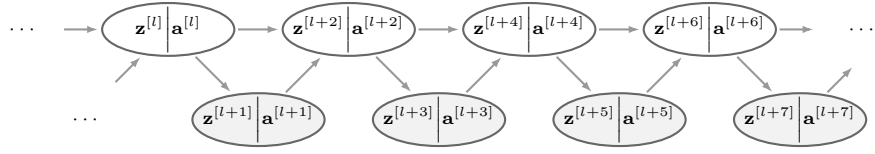
\begin{figure*}[!t]
    \centering

\def\neuronsep{1.2cm}

\begin{scriptsize}
\begin{tikzpicture}[->,>={Latex[length=1.5mm, width=1mm, black!40]},draw=black!40, thick]
	\tikzstyle{every pin edge}=[<-]
	\tikzstyle{neuron}=[ellipse, fill=black!0, draw=black!60, minimum width=50pt, minimum height=20pt,inner sep=0pt]
	\tikzstyle{tapped neuron}=[ellipse, fill=black!5, draw=black!60, minimum width=50pt, minimum height=20pt, thick]
	\pgfsetshortenstart{1pt}
	\pgfsetshortenend{1pt}
	
	\node[] (Ld1) at (0.4*\neuronsep,0) {$\cdots$};
	\node[] (Ld2) at (1.1*\neuronsep,-\neuronsep) {$\cdots$};
	\node[ellipse, minimum width=55pt, minimum height=30pt] (L1) at (0*\neuronsep,0) {};
	\node[ellipse, minimum width=55pt, minimum height=30pt] (L2) at (1*\neuronsep,-\neuronsep) {};
	
	\node[neuron, label=center:$\mathbf{z}^{[l]} \bigg| \mathbf{a}^{[l]}$] (L3) at (2*\neuronsep, 0) {};
	\node[tapped neuron, label=center:$\mathbf{z}^{[l+1]} \bigg| \mathbf{a}^{[l+1]}$] (L4) at (3*\neuronsep, -\neuronsep) {};
	\path (L3) edge (L4);
	\path (L2) edge (L3);
	\path (L1) edge (L3);
	
	\node[neuron, label=center:$\mathbf{z}^{[l+2]} \bigg| \mathbf{a}^{[l+2]}$] (L5) at (4*\neuronsep, 0) {};
	\node[tapped neuron, label=center:$\mathbf{z}^{[l+3]} \bigg| \mathbf{a}^{[l+3]}$] (L6) at (5*\neuronsep, -\neuronsep) {};
	\path (L5) edge (L6);
	\path (L4) edge (L5);
	\path (L3) edge (L5);
	
	\node[neuron, label=center:$\mathbf{z}^{[l+4]} \bigg| \mathbf{a}^{[l+4]}$] (L7) at (6*\neuronsep, 0) {};
	\node[tapped neuron, label=center:$\mathbf{z}^{[l+5]} \bigg| \mathbf{a}^{[l+5]}$] (L8) at (7*\neuronsep, -\neuronsep) {};
	\path (L7) edge (L8);
	\path (L6) edge (L7);
	\path (L5) edge (L7);
	
	\node[neuron, label=center:$\mathbf{z}^{[l+6]} \bigg| \mathbf{a}^{[l+6]}$] (L9) at (8*\neuronsep, 0) {};
	\node[tapped neuron, label=center:$\mathbf{z}^{[l+7]} \bigg| \mathbf{a}^{[l+7]}$] (L10) at (9*\neuronsep, -\neuronsep) {};
	\path (L9) edge (L10);
	\path (L8) edge (L9);
	\path (L7) edge (L9);
	
	\node[] (Ld2) at (9.7*\neuronsep,0) {$\cdots$};
	\node[ellipse, minimum width=55pt, minimum height=30pt] (L11) at (10*\neuronsep,0) {};
	\path (L9) edge (L11);
	\path (L10) edge (L11);
	
	
\end{tikzpicture}
\end{scriptsize}
    \caption{Figure for the derivation of equation (\ref{eq:chainRule2}). The top row of nodes are hidden neurons. The bottom row of nodes are output neurons.}
    \label{fig:chainRuleDerivation}
\end{figure*}

The outputs in ForecastNet are interleaved between hidden cells. Consider an $L$-layered ForecastNet with a single hidden neuron in the hidden cell and a single linear output neuron as illustrated in \figurename{~\ref{fig:chainRuleDerivation}}. The inputs are not shown as they do not contribute to the backpropagated error across hidden cells. For some layer $l$ in this network, $W^{[l]}$ is the weight matrix, $\bar{b}^{[l]}$ is the bias vector, $\mathbf{a}^{[l]}$ is the output vector, and $\mathbf{z}^{[l]}=W^{[l]T} \mathbf{a}^{[l-1]} + \bar{b}^{[l]}$. Using the chain rule of calculus, the derivative of the loss function $\mathcal{L}$ with respect to the weights $W^{[l]}$ at layer $l$ is given by
\begin{align*}
\frac{\partial \mathcal{L}}{\partial W^{[l]}} 
&= 	\frac{\partial \mathcal{L}}{\partial \mathbf{a}^{[l]}} 
\frac{\partial \mathbf{a}^{[l]}}{\partial W^{[l]}}
\end{align*}

Layer $l$ links to layers $l+1$ and $l+2$. Thus, the derivative with respect to $\mathbf{a}^{[l]}$ is expanded as follows
\begin{align*}
\frac{\partial \mathcal{L}}{\partial W^{[l]}} 
&= \left(
\frac{\partial \mathcal{L}}{\partial \mathbf{z}^{[l+1]}} \frac{\partial \mathbf{z}^{[l+1]}}{\partial \mathbf{a}^{[l]}} +
\frac{\partial \mathcal{L}}{\partial \mathbf{z}^{[l+2]}} \frac{\partial \mathbf{z}^{[l+2]}}{\partial \mathbf{a}^{[l]}}
\right)
\frac{\partial \mathbf{a}^{[l]}}{\partial W^{[l]}} \\
&= 	
\frac{\partial \mathcal{L}}{\partial \mathbf{z}^{[l+1]}} 
\frac{\partial \mathbf{z}^{[l+1]}}{\partial \mathbf{a}^{[l]}}
\frac{\partial \mathbf{a}^{[l]}}{\partial W^{[l]}}
+
\frac{\partial \mathcal{L}}{\partial \mathbf{z}^{[l+2]}} 
\frac{\partial \mathbf{z}^{[l+2]}}{\partial \mathbf{a}^{[l]}}
\frac{\partial \mathbf{a}^{[l]}}{\partial W^{[l]}}
\end{align*}
The term $\nicefrac{\partial \mathcal{L}}{\partial \mathbf{z}^{[l+1]}}$ is computed with respect to a target value as layer $l+1$ is an output layer. Layer $l+2$ is a hidden layer. The term  $\nicefrac{\partial \mathcal{L}}{\partial \mathbf{a}^{[l+2]}}$ is thus expanded as follows.
\begin{align*}
\frac{\partial \mathcal{L}}{\partial W^{[l]}} 
& = 
\frac{\partial \mathcal{L}}{\partial \mathbf{z}^{[l+1]}} 
\frac{\partial \mathbf{z}^{[l+1]}}{\partial \mathbf{a}^{[l]}}
\frac{\partial \mathbf{a}^{[l]}}{\partial W^{[l]}} \\
&~~~~~ +
\left(
\frac{\partial \mathcal{L}}{\partial \mathbf{z}^{[l+3]}} \frac{\partial \mathbf{z}^{[l+3]}}{\partial \mathbf{a}^{[l+2]}} +
\frac{\partial \mathcal{L}}{\partial \mathbf{z}^{[l+4]}} \frac{\partial \mathbf{z}^{[l+4]}}{\partial \mathbf{a}^{[l+2]}}
\right) \\
&~~~~~ \times \left( \frac{\partial \mathbf{z}^{[l+2]}}{\partial \mathbf{a}^{[l]}}
\frac{\partial \mathbf{a}^{[l]}}{\partial W^{[l]}} \right) \\
\end{align*}
Which is expanded into the sum
\begin{align*}
\frac{\partial \mathcal{L}}{\partial W^{[l]}} 
&=
\frac{\partial \mathcal{L}}{\partial \mathbf{z}^{[l+1]}} 
\frac{\partial \mathbf{z}^{[l+1]}}{\partial \mathbf{a}^{[l]}}
\frac{\partial \mathbf{a}^{[l]}}{\partial W^{[l]}} \\
&~~~~~ +
\frac{\partial \mathcal{L}}{\partial \mathbf{z}^{[l+3]}} 
\frac{\partial \mathbf{z}^{[l+3]}}{\partial \mathbf{a}^{[l+2]}} 
\frac{\partial \mathbf{z}^{[l+2]}}{\partial \mathbf{a}^{[l]}}
\frac{\partial \mathbf{a}^{[l]}}{\partial W^{[l]}}\\
&~~~~~ +
\frac{\partial \mathcal{L}}{\partial \mathbf{z}^{[l+4]}} 
\frac{\partial \mathbf{z}^{[l+4]}}{\partial \mathbf{a}^{[l+2]}}
\frac{\partial \mathbf{z}^{[l+2]}}{\partial \mathbf{a}^{[l]}}
\frac{\partial \mathbf{a}^{[l]}}{\partial W^{[l]}}
\end{align*}
Similarly, layer $l+3$ is an output layer and layer $l+4$ is a hidden layer. The term  $\nicefrac{\partial \mathcal{L}}{\partial \mathbf{a}^{[l+4]}}$ is expanded as follows
\begin{align*}
\frac{\partial \mathcal{L}}{\partial W^{[l]}} 
&= 	
\frac{\partial \mathcal{L}}{\partial \mathbf{z}^{[l+1]}} 
\frac{\partial \mathbf{z}^{[l+1]}}{\partial \mathbf{a}^{[l]}}
\frac{\partial \mathbf{a}^{[l]}}{\partial W^{[l]}} \\
& ~~~~ +
\frac{\partial \mathcal{L}}{\partial \mathbf{z}^{[l+3]}} 
\frac{\partial \mathbf{z}^{[l+3]}}{\partial \mathbf{a}^{[l+2]}} 
\frac{\partial \mathbf{z}^{[l+2]}}{\partial \mathbf{a}^{[l]}}
\frac{\partial \mathbf{a}^{[l]}}{\partial W^{[l]}} \\
& ~~~~ +
\left(
\frac{\partial \mathcal{L}}{\partial \mathbf{z}^{[l+5]}} \frac{\partial \mathbf{z}^{[l+5]}}{\partial \mathbf{a}^{[l+4]}} +
\frac{\partial \mathcal{L}}{\partial \mathbf{z}^{[l+6]}} \frac{\partial \mathbf{z}^{[l+6]}}{\partial \mathbf{a}^{[l+4]}}
\right) \\
& ~~~~ \times 
\left( \frac{\partial \mathbf{z}^{[l+4]}}{\partial \mathbf{a}^{[l+2]}}
\frac{\partial \mathbf{z}^{[l+2]}}{\partial \mathbf{a}^{[l]}}
\frac{\partial \mathbf{a}^{[l]}}{\partial W^{[l]}} \right) \\
\end{align*}
Which is expanded into the sum
\begin{align*}
\frac{\partial \mathcal{L}}{\partial W^{[l]}} 
&= 	
\frac{\partial \mathcal{L}}{\partial \mathbf{z}^{[l+1]}} 
\frac{\partial \mathbf{z}^{[l+1]}}{\partial \mathbf{a}^{[l]}}
\frac{\partial \mathbf{a}^{[l]}}{\partial W^{[l]}} \\
& ~~~~ +
\frac{\partial \mathcal{L}}{\partial \mathbf{z}^{[l+3]}} 
\frac{\partial \mathbf{z}^{[l+3]}}{\partial \mathbf{a}^{[l+2]}} 
\frac{\partial \mathbf{z}^{[l+2]}}{\partial \mathbf{a}^{[l]}}
\frac{\partial \mathbf{a}^{[l]}}{\partial W^{[l]}} \\
& ~~~~ +
\frac{\partial \mathcal{L}}{\partial \mathbf{z}^{[l+5]}} 
\frac{\partial \mathbf{z}^{[l+5]}}{\partial \mathbf{a}^{[l+4]}} 
\frac{\partial \mathbf{z}^{[l+4]}}{\partial \mathbf{a}^{[l+2]}}
\frac{\partial \mathbf{z}^{[l+2]}}{\partial \mathbf{a}^{[l]}}
\frac{\partial \mathbf{a}^{[l]}}{\partial W^{[l]}} \\
& ~~~~ +
\frac{\partial \mathcal{L}}{\partial \mathbf{z}^{[l+6]}} 
\frac{\partial \mathbf{z}^{[l+6]}}{\partial \mathbf{a}^{[l+4]}}
\frac{\partial \mathbf{z}^{[l+4]}}{\partial \mathbf{a}^{[l+2]}}
\frac{\partial \mathbf{z}^{[l+2]}}{\partial \mathbf{a}^{[l]}}
\frac{\partial \mathbf{a}^{[l]}}{\partial W^{[l]}}
\end{align*}
This expansion process is continued until the final output layer $L$ is reached. The final result is 
\begin{align*}
\label{eq:chainRule2}
\frac{\partial \mathcal{L}}{\partial W^{[l]}} 
&= 	
\sum_{k=0}^{\frac{L-1-l}{2}}
\frac{\partial \mathcal{L}}{\partial \mathbf{z}^{[l+2k+1]}} 
\frac{\partial \mathbf{z}^{[l+2k+1]}}{\partial \mathbf{a}^{[l+2k]}}
\Psi_k
\frac{\partial \mathbf{a}^{[l]}}{\partial W^{[l]}}
\end{align*}
where
\begin{align*}
\Psi_k = 
\begin{cases}
1 & k=0 \\
\displaystyle \prod_{j=1}^{k} \frac{\partial \mathbf{z}^{[l+2j]}}{\partial \mathbf{a}^{[l+2(j-1)]}}  & k>0
\end{cases}
\end{align*}

    \else
    \fi

\end{document}
